\newtheorem{thm}{Theorem}
\begin{document}
%
\title{Learning Aggregated Transmission Propagation Networks for Haze Removal and Beyond}
%
%
%

\author{Risheng~Liu,~\IEEEmembership{Member,~IEEE,} Xin~Fan,~\IEEEmembership{Member,~IEEE,}
        Minjun~Hou, Zhiying~Jiang, Zhongxuan~Luo
        and~Lei~Zhang,~\IEEEmembership{Fellow,~IEEE}
\thanks{This work is partially supported by the National Natural Science Foundation of China (Nos. 61672125, 61300086, 61572096, and 61432003), the Fundamental Research Funds for the Central Universities and the Hong Kong Scholar Program.}
\thanks{R. Liu is with DUT-RU International School of Information Science \& Engineering and Key Laboratory for Ubiquitous Network and Service Software of Liaoning Province, Dalian University of Technology, Dalian, China. (Corresponding author, e-mail: rsliu@dlut.edu.cn).}
\thanks{X. Fan, M. Hou, Z. Jiang and Z. Luo are with DUT-RU International School of Information Science \& Engineering, and Key Laboratory for Ubiquitous Network and Service Software of Liaoning Province, Dalian University of Technology, Dalian, China.}
\thanks{L. Zhang is with Department of Computing, the Hong Kong Polytechnic University.}

\thanks{Manuscript received April 19, 2018; revised August 26, 2018.}
}

%
%

\markboth{Journal of \LaTeX\ Class Files,~Vol.~14, No.~8, August~2018}%
{Shell \MakeLowercase{\textit{et al.}}: Bare Demo of IEEEtran.cls for IEEE Journals}
%



\maketitle

\begin{abstract}
Single image dehazing is an important low-level vision
task with many applications. Early researches have investigated different kinds
of visual priors to address this problem. However,
they may fail when their assumptions are not valid on specific images.
Recent deep networks also achieve relatively good performance in this task.
But unfortunately, due to the disappreciation of rich physical rules in hazes,
large amounts of data are required for their training. More importantly,
they may still fail when there exist completely different haze distributions in testing images.
By considering the collaborations of these two perspectives,
this paper designs a novel residual architecture to aggregate
both prior (i.e., domain knowledge) and data (i.e., haze distribution)
information to propagate transmissions for scene radiance estimation. We
further present a variational energy based perspective to investigate the intrinsic
propagation behavior of our aggregated deep model. In this way, we actually
bridge the gap between prior driven models and data driven networks and leverage
advantages but avoid limitations of previous dehazing approaches. A lightweight
learning framework is proposed to train our propagation network.
Finally, by introducing a task-aware image separation formulation with a flexible optimization scheme, we extend the proposed model for more challenging vision tasks, such as underwater image enhancement and single image rain removal.
Experiments on both synthetic and real-world images demonstrate the
effectiveness and efficiency of the proposed framework.
\end{abstract}

\begin{IEEEkeywords}
Transmission Propagation, Residual Networks, Haze and Rain Removal, Underwater Image Enhancement.
\end{IEEEkeywords}

%
\IEEEpeerreviewmaketitle

\section{Introduction}

\begin{figure}[!htbp]
	\centering
		\begin{tabular}{c@{\extracolsep{0.3em}}c@{\extracolsep{0.3em}}
		c@{\extracolsep{0.3em}}c}
	\includegraphics[width=.24\textwidth]{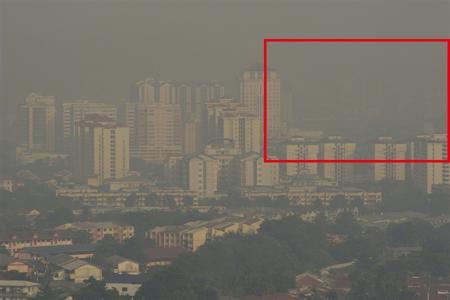}
	&\includegraphics[width=.24\textwidth]{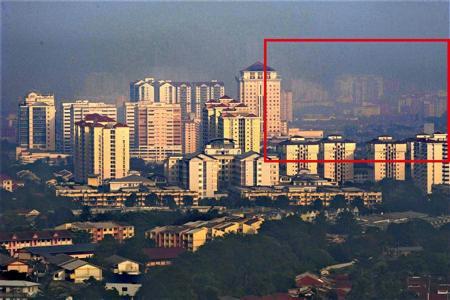}\\
	Image with dense hazes & Ours\\
	\includegraphics[width=.24\textwidth]{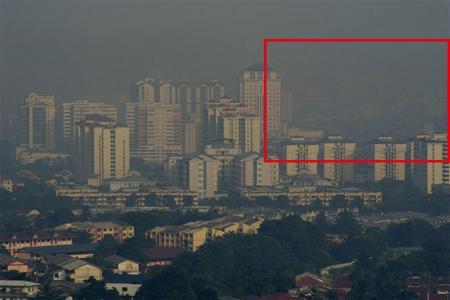}
	&\includegraphics[width=.24\textwidth]{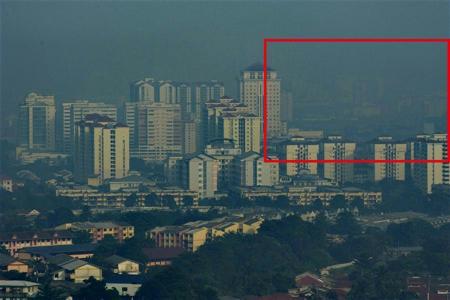}\\
	Cai~\emph{et al.} & Ren~\emph{et al.}
\end{tabular}
	\caption{Single image dehazing results by different propagation networks on a real-world example. The zoomed in comparisons of red rectangle regions are shown in supplemental materials.}\label{fig:first}
\end{figure}
\IEEEPARstart{H}aze is a slight obscuration of the lower atmosphere, typically caused by fine suspended particles. Due to this atmospheric phenomenon,
images captured by camera will fade and the contrast of the observed object will be reduced.
Mathematically, hazy image can be formulated as a per-pixel convex combination of the haze-free image and the global atmospheric light. Image dehazing is
just the task of recovering haze-free images from hazy observations, which has wide applications in
computer vision society (e.g., outdoor surveillance, object detection, remote sensing and underwater photograph, just name a few).

It has been investigated that image degradations caused by hazes will increase with the distance from the camera, since the scene radiance decreases and the atmospheric light magnitude increases. Thus haze removal
mainly relies on the scene depth information. But unfortunately, estimating
depths of pixels in the image is also a challenging task. Early approaches~\cite{schechner2001instant} often
utilize multiple images of the same scene to deal with this problem. However, in real-world
scenario, there may only exist one single image for a specific scene. Thus recent
literatures tend to address the more challenging single image dehazing task, which
has even fewer scene information~\cite{xu2016review}.



\subsection{Related Works}


We first survey some related works on single image dehazing, which can be roughly divided into two categories, i.e., prior driven
models and data driven networks.

\textbf{Prior Driven Models:} In past years, various priors (also known as domain knowledge based cues) have been investigated to capture deterministic (e.g., physical rules and variational energy) or statistical (e.g., distributions) properties of hazy images.
For instance, Fattal~\cite{Fattal2008} estimated depth maps and haze-free images based on the assumption that surface shadings and transmissions should be
locally uncorrelated.
He~\emph{et~al.}~\cite{he2011single} observed that
most local patches in haze-free images contain extremely low
intensities in at least one color channel and then
the thickness of the haze can be estimated by the so-called dark channel prior (DCP).
The works in \cite{Tarel2009,Gibson2012}
improved DCP by replacing the minimum operator with the median operator. In~\cite{meng2013efficient}, Meng~\emph{et al.} reformulated DCP as a boundary constraint and incorporated a contextual regularization to model transmissions of foggy images. Fan~\emph{et al.}~\cite{fan2016two} proposed a two-layer Gaussian process regression model to refine DCP.
 Lai~\emph{et al.}~\cite{lai2015single} proposed
two scene priors to estimate the optimal transmission map.
By creating a linear formulation for the scene depth under
a color attenuation prior, Zhu~\emph{et~al.}~\cite{zhu2015fast} designed a
supervised haze removal model. Berman~\emph{et~al.}~\cite{berman2016non} presented the concept of haze-line and adopted it to estimate the transmission factors.
The work in~\cite{li2014contrast,chen2016robust} aimed to solve variational models to suppress artifacts in hazy images.

It can be observed that all above mentioned conventional prior driven dehazing methods heavily rely on \emph{certain assumptions of hazy images}. So they may fail on specific data when these cues are not valid. For example, the method of Fattal performs well on thin hazy images. But this method cannot successfully recover images with thick hazes since it needs rich color information and color difference among pixels. And DCP may break down in bright areas of the scene (e.g., sky regions). As for the method of Berman~\emph{et~al.}, it may not perform well when the airlight is significantly brighter than the scene. Though suppressing artifacts, variational models often
need to perform time consuming iterations to obtain the (local) optimal solutions.

\textbf{Data Driven Networks:} Very recently, training heuristically constructed deep convolutional neural networks (CNNs) on large-scale datasets
have delivered record breaking performance in many vision and recognition tasks~\cite{lecun2015deep,he2015deep}.
Some existing methods suggest that CNNs can also benefit single image dehazing and are effective
to handle complex scenes by accurate depth information estimation.
Cai~\emph{et~al.}~\cite{cai2016dehazenet} proposed a trainable end-to-end CNN, which inputs hazy image,
and outputs corresponding transmission map, to recover haze-free image. Ren~\emph{et~al.}~\cite{ren2016single} further
introduced multi-scale CNNs for single image dehazing.
They first used a coarse-scale network to roughly estimate the transmission structure and
then refined it by another fine-scale network.

Although relatively good results have been achieved by existing deep networks,
there are still two major issues of these fully data driven models should be addressed. Firstly, domain knowledge of hazy images, which is shown to be effective in previous works, are completely discarded by these networks. Secondly, the performance of current network architectures are tightly dependent on the quality and quantity of the training data. So
large amounts of data are required for networks training. For example,
the training sets required in \cite{cai2016dehazenet} and \cite{ren2016single} are all of the order of ten thousands. It can also be seen that even with amounts of training images, standard deep models may still
fail on images with different haze distributions due to their \emph{fully-training-data-dependent} nature.

\subsection{Contributions}
As prior driven dehazing models are mainly derived from particular visual cues,
they may fail when their assumptions are not valid on specific images. On the other hand, recent deep networks tend to learn dehazing processes fully based on training data. So they indeed completely discard physical principles of hazes.
To mitigate above issues, we propose a novel propagation formulation, named
data-and-prior-aggregated transmission network (DPATN), to leverage advantages but avoid limitations of domain knowledge and training data for single image dehazing.
Specifically,
we first build an aggregated residual architecture to formulate transmission propagation.
The intrinsic propagation behavior of DPATN
is then investigated by an energy-based-modeling perspective~\cite{lecun2006tutorial}.
A lightweight learning framework is also established for network training. Figure~\ref{fig:first} shows performances
of DPATN together with two recently proposed CNNs on a challenging real-world example. Finally, we also extend DPATN for more challenging tasks, such as underwater image enhancement and single image rain removal.
In summary, our DPATN framework has at least three advantages over existing dehazing architectures.
\begin{itemize}
	\item Specific domain knowledge of the haze removal task
	is explicitly exploited to the transmission propagation in DPATN, yielding more accurate transmission estimations
	and realistic dehazing results.
	\item DPATN needs extremely less training images
	than existing dehazing networks\footnote{As shown in Section~\ref{sec:data}, dozens of images are enough for
		our DPATN training. While existing dehazing networks (e.g., Cai~\emph{et al.} and Ren~\emph{et al.}) all require tens of thousands of training images.}. Furthermore, in contrast to conventional feed forward networks, DPATN is able to directly propagate
	transmissions through the network in both forward and backward passes, leading to nice propagation properties.
	\item Different from most current deep formulations, which build their architectures in heuristic ways, we provide a novel manner to investigate the intrinsic propagation behaviors of our architectures
	following an energy minimization perspective. These insights can also be extended to guide other complex vision tasks.
	\item To address more challenging image enhancement tasks, such as underwater image enhancement and single image rain removal, we extend DPATN with a task-aware separation formulation and prove the global convergence property of our final propagation.
\end{itemize}




\section{The Proposed Framework}

We start with designing data-and-prior-aggregated transmission network (DPATN), to bridge the gap between domain knowledge and training data for haze removal.

\subsection{Atmospheric Scattering Model}

To formulate hazy images, we first describe the following widely known atmospheric
scattering model~\cite{mccartney1976optics}:
\begin{equation}
	\left\{\begin{array}{l}
		\mathbf{I}(x)=\mathbf{J}(x)t(x) + \mathbf{A}(1-t(x)),\\
		t(x)=\exp(-\beta d(x)),
	\end{array}\right.\label{eq:haze}
\end{equation}
where $\mathbf{A}$ is the global atmospheric light, $\mathbf{I}(x)$, $\mathbf{J}(x)$, $d(x)$ and $t(x)$ are respectively the observed hazy image, the latent scene radiance, the scene depth and the medium transmission at pixel location $x$ and $\beta$ is the medium extinction coefficient.
It is known that the value of $t(x)$ is in the range $[0, 1]$ and used to describe the portion of light that is not scattered and
reaches the camera sensors.
The second equality further indicates that $t(x)$ is exponentially attenuated with the scene depth.
The main goal of image dehazing is to recover $\mathbf{J}(x)$ from $\mathbf{I}(x)$. Following Eq.~\eqref{eq:haze},
we have that estimating accurate transmission map $t(x)$ plays the core role in this task.
However, due to multiple solutions exist for a single hazy image, the problem is highly ill-posed.
Notice that to facilitate the presentation, hereafter we denote the discrete form of $t(x)$ as $\mathbf{t}=[t_1,\cdots,t_N]\in\mathbb{R}^N$, where $N$ is the number of pixels in the hazy image.

\begin{figure*}
	\begin{center}
		\includegraphics[width=1\textwidth]{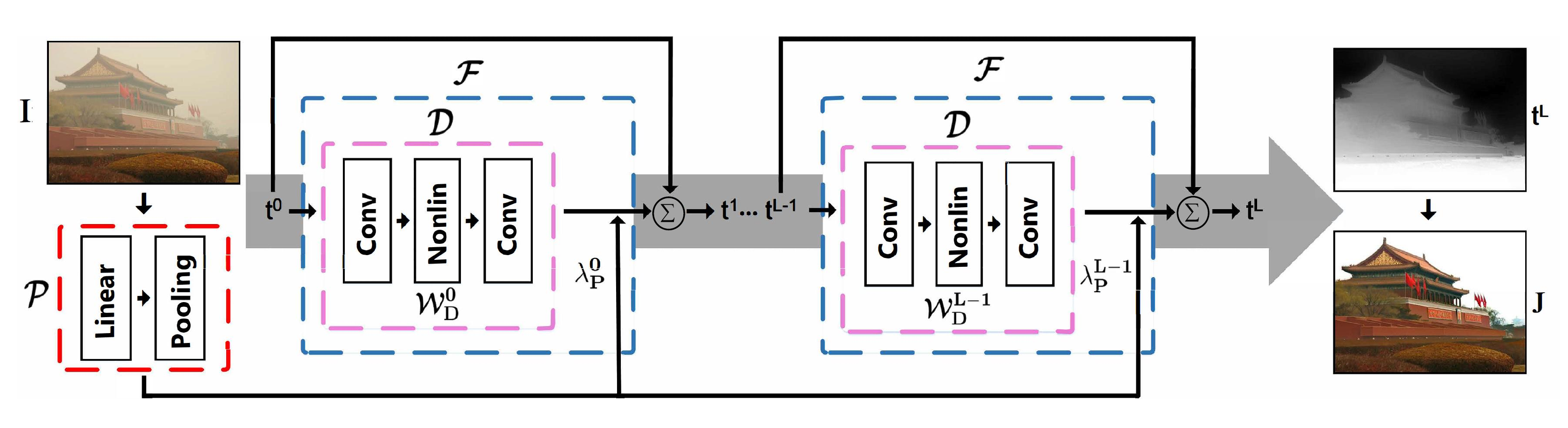}
	\end{center}
	\caption{The overview of transmission propagation in DPATN (along the arrow direction). The red, pink and blue dashed rectangles denote the prior driven (i.e., $\mathcal{P}$), data driven (i.e., $\mathcal{D}$) and aggregated (i.e., $\mathcal{F}$) architectures, respectively. The right-most column
		shows the estimated transmission map and the restored image of DPATN, respectively.}\label{fig:short}
\end{figure*}

\subsection{Aggregated Architecture for Transmission Propagation}

We first design the general propagation scheme of transmission $\mathbf{t}$ as the following residual architecture:
\begin{equation}
	\mathbf{t}^{+} = \mathbf{t} - \mathcal{F}(\mathbf{t}; \mathcal{W}),\label{eq:model}
\end{equation}
where $\mathcal{F}$ is a residual function with parameters
$\mathcal{W}$ and $\mathbf{t}^{+}$ is the output of this module with identity skip
connections.
Using this basic model, our goal reduces to learn a mapping $\mathcal{F}$ to formulate
the propagation residual of transmissions.
We can see that different from
strictly sequential networks, the module in \eqref{eq:model} introduces identity skip-connections that bypass residual layers, allowing transmissions to flow
from any layers directly to any subsequent layers.

As for the specific form of $\mathcal{F}$, we define it by aggregating a data driven submodule
(denoted as $\mathcal{D}$) and a prior driven submodule (denoted as $\mathcal{P}$), i.e.,
\begin{equation}
	\mathcal{F}(\mathbf{t}; \mathcal{W}) = \mathcal{D}(\mathbf{t}; \mathcal{W}_{D})+\lambda_{P}\mathcal{P}(\mathbf{I}).\label{eq:residual}
\end{equation}
Here $\mathcal{W}=\{\mathcal{W}_D,\lambda_{P}\}$ are learnable parameters in which $\mathcal{W}_D$ denote
the propagation parameters of $\mathcal{D}$ and $\lambda_{P}\geq 0$ is a weight parameter to penalize $\mathcal{P}$.
In the following, we deduce particular formulations of these architectures based on training data and physical principles, respectively.

\textbf{Data Submodule:}
We first design the data driven submodule $\mathcal{D}$ as a cascade of two convolutions and one activation to fit transmission propagation:
\begin{equation}
	\mathcal{D}(\mathbf{t};\mathcal{W}_D) = \sum\limits_{k=1}^K\hat{\bm{\omega}}_k\otimes\phi_k(\check{\bm{\omega}}_k\otimes\mathbf{t}),\label{eq:data}
\end{equation}
where $\{\phi_k\}_{k=1}^K$ denote nonlinear activations, $\otimes$ denotes convolution operator and $\{\hat{\bm{\omega}}_k, \check{\bm{\omega}}_k\}_{k=1}^K$
are pairs of convolutional filters before and after each activation.
It is clear that propagating $\mathbf{t}$ only with $\mathcal{D}$ will definitely discard rich prior information of hazy images.
Thus it is necessary to further utilize our domain knowledge (e.g., Eq.~\eqref{eq:haze}) to control this
transmission propagation toward desired stable state.

\textbf{Prior Submodule:}
We now discuss how to design our prior submodule
$\mathcal{P}:\mathbf{I}(x)\to t(x)$ to incorporate
visual cues to guide the data driven transmission propagation.
Specifically, by reformulating Eq.~\eqref{eq:haze}, we have that $t(x)$ is actually the ratio of two line segments~\cite{he2011single} at each pixel location, i.e.,
\begin{equation}
	t(x) =\frac{\|\mathbf{I}_{\mathbf{A}}(x)\|}{\|\mathbf{J}(x)-\mathbf{A}\|}=\frac{{\mathbf{I}_{\mathbf{A}}}_c(x)}
	{\mathbf{J}_c(x)-\mathbf{A}_c},  \label{eq:t0}
\end{equation}
where $c\in\{r, g, b\}$ is the color channel index, ${\mathbf{I}_{\mathbf{A}}}_c(x)=\mathbf{I}_c(x)-\mathbf{A}_c$ is the translated observation such that the airlight is at the origin.
Since the pixel values of the latent scene radiance $\mathbf{J}(x)$ should be bounded, it is natural to enforce the following constraints on $\mathbf{J}(x)$:
$\check{\alpha}\check{\mathbf{I}}_c\leq\mathbf{J}_c\leq\hat{\alpha}\hat{\mathbf{I}}_c$, where $\hat{\alpha},\check{\alpha}\geq 0$ are two scaling parameters and $\hat{\mathbf{I}}_c$ and $\check{\mathbf{I}}_c$ denote the maximum and minimum value of $\mathbf{I}_c$, respectively.
Then by combining this inequality with Eq.~\eqref{eq:t0}, we can obtain the prior module as
\begin{equation}
	\mathcal{P}(\mathbf{I})=\mathcal{T}_{[0,1]}\left(\max\limits_{c}\left(\frac{{\mathbf{I}_{\mathbf{A}}}_c(x)}{\hat{\alpha}\hat{\mathbf{I}}_c-\mathbf{A}_c},
	\frac{{\mathbf{I}_{\mathbf{A}}}_c(x)}{\check{\alpha}\check{\mathbf{I}}_c-\mathbf{A}_c}\right)\right),\label{eq:prior}
\end{equation}
where $\mathcal{T}_{[0,1]}$ denotes the projection on the range $[0,1]$.
 By setting $\check{\alpha}=0$ in Eq.~\eqref{eq:prior}, we can observe that DCP formulation is just a special case of $\mathcal{P}$. However,
it is known that the original DCP always tends to provide inexact transmission estimation on specific regions (e.g., bright sky and headlights of cars). Fortunately, it will be demonstrated in the experimental part that our proposed prior module in Eq.~\eqref{eq:prior} actually provides an efferent manner to avoid incorrect transmission estimations on these challenging hazy images.


We illustrate the overall propagation pipeline of DPATN in Figure~\ref{fig:short}. The dehazing results of DPATN with different architectures (i.e.,
$\mathcal{P}$, $\mathcal{D}$ and $\mathcal{F}$) are further demonstrated in Figure~\ref{fig:component}. In this experiment, we also plot results generated by the naive combination of the prior and the media filter (denoted as ``P + MF'') as our baseline. It can be seen that in the sky region, the results of the prior $\mathcal{P}$ tends to estimate excessive depth and thus leads to obvious artifacts.
While the purely data-based $\mathcal{D}$ may also result in unclear restoration (e.g., with low contrast near the building boundary). We can see that the media filter only slightly improves the performance of $\mathcal{P}$. But it cannot correct the improper scene depth, thus there still exists severe artifacts in its result. In contrast, our proposed aggregation architecture $\mathcal{F}$ can provide more accurate transmission map and much better dehazing result. Moreover, the aggregated $\mathcal{F}$ obtains the highest quantitative score (i.e., PSNR) among all the compared strategies.
\begin{figure*}[t]
	\begin{center}
		\begin{tabular}{c@{\extracolsep{0.3em}}c@{\extracolsep{0.3em}}c@{\extracolsep{0.3em}}c@{\extracolsep{0.3em}}c@{\extracolsep{0.3em}}c}

			\includegraphics[width=.152\textwidth]{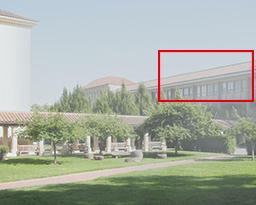}
			&\includegraphics[width=.152\textwidth]{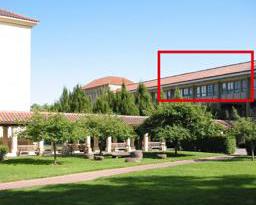}
			&\includegraphics[width=.152\textwidth]{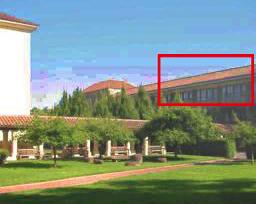}
			&\includegraphics[width=.152\textwidth]{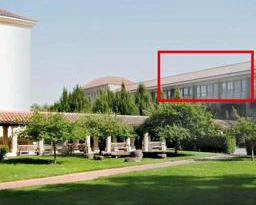}
			&\includegraphics[width=.152\textwidth]{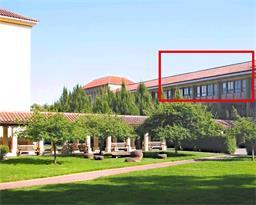}
			&\includegraphics[width=.152\textwidth]{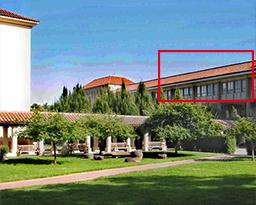}
			\\
			\includegraphics[width=.152\textwidth]{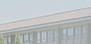}
			&\includegraphics[width=.152\textwidth]{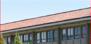}
			&\includegraphics[width=.152\textwidth]{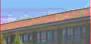}
			&\includegraphics[width=.152\textwidth]{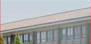}
			&\includegraphics[width=.152\textwidth]{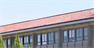}
			&\includegraphics[width=.152\textwidth]{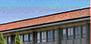}
			\\
			
			&\includegraphics[width=.152\textwidth]{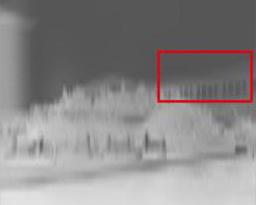}
			&\includegraphics[width=.152\textwidth]{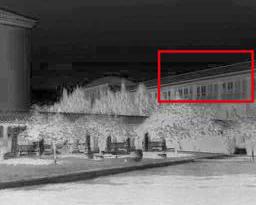}
			&\includegraphics[width=.152\textwidth]{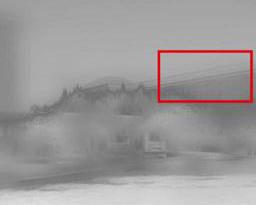}
			&\includegraphics[width=.152\textwidth]{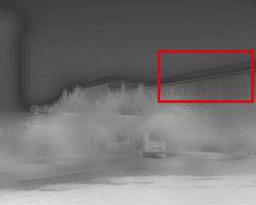}
			&\includegraphics[width=.152\textwidth]{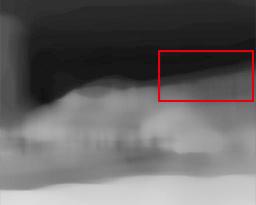}
			\\Input  & Ground Truth & Prior (23.25) & Data (23.13) & Agg  (\textbf{26.82}) &Prior + MF (24.56)
		\end{tabular}
	\end{center}
	\caption{The dehazing results obtained by DPATN with different components: ``Prior'', ``Data'', and their aggregation (``Agg'' for short). Top two rows: the haze removal results with zoomed-in comparisons. Bottom row: the estimated transmission maps. On the right most column, we also illustrate the results of the ``Prior + MF'' strategy. Here ``MF'' denotes the median filtering operation. The quantitative scores (i.e., PSNR) are also reported in the brackets. }\label{fig:component}
\end{figure*}

\subsection{Lightweight Learning Framework}\label{sec:learn}

As our goal is to learn an aggregated residual network to map the hazy image to its corresponding transmission, we first collect a set of training pairs
$\{\mathbf{I}_s, \mathbf{t}_s^*\}_{s=1}^S$, where $\mathbf{I}_s$ is a hazy image and $\mathbf{t}_s^*$ is the corresponding
ground-truth transmission map.
Then we define a quadratic training cost function to measure the difference between the output of DPATN
and the ground-truth transmission map as
\begin{equation}
\mathcal{J}(\mathbf{t}^L, \mathbf{t}^*;\{\mathcal{W}^{l}\}_{l=1}^L)=\frac{1}{2}\|\mathbf{t}^L-\mathbf{t}^*\|_2^2.
\end{equation}
Therefore, the learning process of DPATN can be formulated
as a deep network constrained optimization model:
\begin{equation}
\left\{
\begin{array}{l}
\min\limits_{\{\mathcal{W}^{l}\}_{l=1}^L}\sum\limits_{s=1}^S\mathcal{J}(\mathbf{t}_s^L, \mathbf{t}_s^*;\{\mathcal{W}^{l}\}_{l=1}^L),\\
s.t. \ \mathbf{t}_s^L = \mathbf{t}_s^0 + \sum\limits_{l=0}^{L-1}\mathcal{F}(\mathbf{t}_s^{l}; \mathcal{W}^{l}).
\end{array}
\right.\label{eq:train}
\end{equation}

The key is to compute the gradients of the training cost $\mathcal{J}$
with respect to the parameters $\mathcal{W}^l$. By the chain rule of backpropagation, we have
\begin{equation}
\begin{array}{r}
\frac{\partial \mathcal{J}(\mathbf{t}^L, \mathbf{t}^*)}{\partial \mathcal{W}^l}=
\frac{\partial \mathcal{J}(\mathbf{t}^L, \mathbf{t}^*)}{\partial \mathbf{t}^l}\frac{\partial \mathbf{t}^l}{\partial \mathcal{W}^l}=\frac{\partial \mathcal{J}(\mathbf{t}^L, \mathbf{t}^*)}{\partial \mathbf{t}^L}
\frac{\partial \mathbf{t}^{L}}{\partial \mathbf{t}^{l}}\frac{\partial \mathbf{t}^l}{\partial \mathcal{W}^l}\\
=\frac{\partial \mathbf{t}^l}{\partial \mathcal{W}^l}\frac{\partial \mathcal{J}(\mathbf{t}^L, \mathbf{t}^*)}
{\partial \mathbf{t}^L}\left(1+ \frac{\partial}{\partial \mathbf{t}^l}\sum\limits_{i=l}^{L-1}\mathcal{D}(\mathbf{t}^i,\mathcal{W}^i)\right).\\
\end{array}\label{eq:gradient}
\end{equation}

In general, one may follow standard training schemes (e.g.,~\cite{he2015deep})
to learn all the network parameters from training data using Eq.~\eqref{eq:gradient}. However, such direct strategy may lead to extremely high computational burden.
Therefore, we
adopt a lightweight learning framework for DPATN, instead. That is, we parameterize network architectures based on their specific structures to reduce the network parameters as follows:
We represent each filter as
$\bm{\omega}_k=\sum_{i}\alpha_{k,i}\mathbf{d}_i$
where $\{\mathbf{d}_i\}$ is the DCT basis and $\{\alpha_{k,i}\}$ is the set of filter coefficients to be learned.
As for each transformation $\phi_k$, we parameterize it
using piecewise linear functions determined by a set of control points $\{p_i,q_{k,i}\}$, i.e., $\phi_k=\sum_i\beta_{k,i}q_{k,i}$
where $\{p_i\}$ are predefined positions uniformly located within $[-1, 1]$,
$\{q_{k,i}\}$ are the function values at these positions and $\{\beta_{k,i}\}$ are combination coefficients.
In this way, the number of parameters in DPATN can be significantly reduced. Besides, the derivability
of $\phi_k$ required by analysis in Section~\ref{sec:ebm} can also be guaranteed by such parameterization.
Finally, we follow suggestions in \cite{chen2015learning} to utilize L-BFGS algorithm to optimize Eq.~\eqref{eq:train}.

\subsection{Latent Scene Radiance Recovery}\label{sec:dehazing}

Now we are ready to discuss how to recover scene radiance $\mathbf{J}$ from hazy observation $\mathbf{I}$.
In addition to the medium transmission $\mathbf{t}$,
we need to estimate the global atmospheric light $\mathbf{A}$ in Eq.~\eqref{eq:haze}.

 The most commonly used strategy~\cite{he2011single,ren2016single} is to select several brightest pixels (e.g., $0.1\%$) in the dark channel. Among these pixels, the one with the highest intensity in the image is selected as the atmospheric light.
Here
we improve this strategy based on a filtering process~\cite{meng2013efficient}, i.e.,
$\mathbf{A}^c=\max\left(\mathtt{minfilter}(\mathbf{I}^c)\right)$,
where the ``$\max$'' operation outputs the maximum values and ``$\mathtt{minfilter}$'' denotes the sliding window minimum filter.
With the estimated $\mathbf{t}$ and $\mathbf{A}$, we can recover the haze-free image using Eq.~\eqref{eq:haze}.
To avoid division by zeros, $\mathbf{J}$ is particularly calculated by
\begin{equation}
	\mathbf{J}(x)=\mathbf{A}+(\mathbf{I}(x)-\mathbf{A})/(\max(t(x),\epsilon)),\label{eq:j}
\end{equation}
where $\epsilon\geq 0$ is a small constant.

\section{Energy-Based Propagation Investigation}\label{sec:ebm}

The concept of ``energy'' in LeCun's energy-based-model (EBM)~\cite{lecun2006tutorial} is referring to a parameterized function that maps points in input space to a scalar such that the desired points get assigned low energies, while the incorrect ones are assigned high energies. It has been demonstrated that various learning frameworks (e.g.,
probabilistic models~\cite{marc2007unified}, partial differential equation models~\cite{liu2016learning} and deep networks~\cite{zhao2016energy,liu2018proximal,liu2018toward}) can be reformulated in the light of ``energy minimization''.

In this section, we would like to utilize EMB perspective to investigate the propagation behavior of our aggregated transmission network. That is, we aim to deduce a variational formulation from the developed DPATN, thus provide a more straightforward way to understand the insights in DPATN. Indeed, different from standard methodology, which deduces the practical calculation scheme from the designed variational energy, here we establish an underlying energy model from our DPATN propagation scheme, inversely.

Firstly, based on the recursive architecture in \eqref{eq:model}, we can explicitly build connections between the input $\mathbf{t}^0$ and output $\mathbf{t}^L$
of the network (with $L$ residual modules) as follows:
\begin{equation}
\mathbf{t}^L = \left(\mathbf{t}^0+\sum_{l=0}^{L-1}\lambda_{P}^l\mathcal{P}(\mathbf{I})\right) + \sum\limits_{l=0}^{L-1}\mathcal{D}(\mathbf{t}^{l}; \mathcal{W}_D^{l}),
\end{equation}
where $\{\mathbf{t}^l\}_{l=1}^{L-1}$ are intermediate transmissions.
Suppose there exist the derivable functions
$\rho_k$ satisfying\footnote{As presented in Section~\ref{sec:learn},
	this property can be guaranteed by a specifically designed parameterization scheme.}
\begin{equation}
\partial \rho_k /\partial \mathbf{t}=-\phi_k,
\end{equation}
for each $\phi_k$, then we can obtain an energy
\begin{equation}
\mathcal{L}_D=\sum_{k=1}^K\rho_k(\hat{\bm{\omega}}_k\otimes\mathbf{t}).\label{eq:energy-l}
\end{equation}
Thus it is easy to understand that the data submodule $\mathcal{D}$ is just the negative gradient direction of $\mathcal{L}_D$, i.e.,
\begin{equation}
\mathcal{D}(\mathbf{t};\mathcal{W}_D)=-\partial \mathcal{L}_D(\mathbf{t};\mathcal{W}_D)/\partial \mathbf{t},
\end{equation}
in which we implicitly enforce that $\check{\bm{\omega}}_k$ is obtained by rotating $\hat{\bm{\omega}}_k$ $180$ degrees.
 So in summary, DPATN actually builds a \emph{learnable
	propagation (i.e., gradient descent trajectory)} with initial status $\tilde{\mathbf{t}}^0=\mathbf{t}^0+\sum_{l=0}^{L-1}\lambda_{P}^l\mathcal{P}(\mathbf{I})$
to minimize the following transmission energy:
\begin{equation}
\min\limits_{\mathbf{t}^L}\mathcal{L}_D(\mathbf{t}^L;\{\mathcal{W}_D^l\}_{l=0}^{L-1}),\label{eq:engery}
\end{equation}
where we explicitly consider $\{\mathcal{W}_D^l\}_{l=0}^{L-1}$ as parameters of $\mathbf{t}^L$ to emphasize the recursive nature of this energy.

By investigating the derived form of $\mathcal{L}_D$ in Eq.~\eqref{eq:energy-l}, we can link the energy minimization model in Eq.~\eqref{eq:engery} to the maximum likelihood estimation problem with the following probability distribution
\begin{equation}
p(\mathbf{t}^l)\propto\prod\limits_{k=1}^K
\exp(-\rho_k^l(\hat{\bm{\omega}}_k^l\otimes\mathbf{t}^l)), \ \mbox{for any} \ 1\leq l \leq L.\label{eq:gibbs}
\end{equation}
Therefore, we can also interpret DPATN from Bayesian viewpoint, i.e., the behavior of DPATN can be understood as a prior guided
prorogation with a learnable Gibbs distribution~\cite{schmidt2016cascades} on each intermediate and output transmissions.

\emph{In summary, DPATN provides a new way to aggregate the describing power obtained from data dependent deep architectures
	and visual priors revealed by the physical rule, thus bridges the gap between domain knowledge and training data in dehazing task.}

\textbf{Remarks:} As a nontrivial byproduct, the above
analysis may also be helpful for investigating insights and deducing new architectures for other vision tasks.
For example, residual network (ResNet)~\cite{he2015deep} has achieved great success in high-level recognition/classification tasks. Though some efforts from the experimental point of view
have been made~\cite{he2016identity,veit2016residual}, it is still
difficult to integrate the intrinsic mechanism of ResNet.
We want to point out that the Gibbs distribution in~\eqref{eq:gibbs} can be directly utilized to interpret
the propagation behaviors of ResNet.
This is because the data driven submodule in DPATN
actually share the same structure with the basic unit in ResNet. Furthermore, following our methodology,
it is also possible to incorporate additional human perspectives to improve
the recognition/classification performance of standard ResNet.

\section{Extensions with Task-Aware Image Separation} \label{extension}

To formulate more challenging image enhancement problems,
we would like to extend the proposed propagation network using a generalized atmospheric scattering model, i.e.,
\begin{equation}
\mathbf{I}(x)=\mathbf{J}(x)t(x) + \mathbf{A}(1-t(x))+\mathbf{P}(x),\label{eq:ge-haze}
\end{equation}
where $\mathbf{P}(x)$ is an additional problem-dependent image layer.
It is observed that we actually combine three different terms, i.e., the direct transmission $\mathbf{J}(x)t(x)$, airlight $\mathbf{A}(1-t(x))$ and
task-related layer $\mathbf{P}(x)$ and assume that the incoming light intensity to a camera is linearly proportional to
the camera's pixel values. Similar to Eq.~\eqref{eq:haze}, within this extended framework, the transmission map $t$ also plays the main role in eliminating
the hazing effect in the image. Meanwhile, the additional term $\mathbf{P}$ will be used to incorporate image priors into
the process.

\subsection{Task-Aware Image Layer Separation}

In this part, we develop a new image separation formulation together with a flexible plug-and-play optimization scheme to extend DPATN for more challenging image enhancement tasks, such as underwater image enhancement and singe image rain removal. We first recognize that it is necessary to explicitly formulate the unknown corruptions (e.g. color-shift or rain layer) in these problems. Therefore, we consider $\mathbf{P}$ as the corruption layer and introduce a latent observation variable $\mathbf{L}(x)=\mathbf{J}(x)t(x) + \mathbf{A}(1-t(x))$ in Eq.~\eqref{eq:ge-haze}. In this way, these problems can be addressed by first estimating the latent image layer $\mathbf{L}$ from corrupted observation $\mathbf{I}$ and then calculating intrinsic transmission $t$ from $\mathbf{L}$.

Specifically, we consider the following task-aware image layer separation problem (to remove the effects of $\mathbf{P}$):
\begin{equation}
\quad\min\limits_{0\leq\mathbf{L},\mathbf{P}\leq\mathbf{I}}\frac{1}{2}\|\mathbf{I}-\mathbf{L}-\mathbf{P}\|^2 +\delta(\mathbf{L}) + \sigma(\mathbf{P}),\label{eq:task-aware}
\end{equation}
where $\delta$ and $\sigma$ are the regularization terms on $\mathbf{L}$ and $\mathbf{P}$, respectively.
Rather than directly solving Eq.~\eqref{eq:task-aware} using standard solvers, this work provides a flexible plug-and-play scheme to incorporate different strategies to optimize the general image layer separation model for particular problems.
Specifically, we first introduce a half-quadratic formulation~\cite{geman1995nonlinear} with two auxiliary variables $\mathbf{Y}_L$ and $\mathbf{Y}_P$ to Eq.~\eqref{eq:task-aware}:
\begin{equation}
\min\limits_{\mathbf{L}, \mathbf{P}, \mathbf{Y}_L, \mathbf{Y}_P}\frac{1}{2}\|\mathbf{I}-\mathbf{L}-\mathbf{P}\|^2 +\tilde{\delta}(\mathbf{L},\mathbf{Y}_L) + \tilde{\sigma}(\mathbf{P},\mathbf{Y}_{P}).\label{eq:task-aware-hq}
\end{equation}
The half-quadratic penalized priors $\tilde{\delta}$ and $\tilde{\sigma}$ are defined as
\begin{equation}
\begin{array}{l}
\tilde{\delta}(\mathbf{L};\mathbf{Y}_L):=\min\limits_{0\leq\mathbf{Y}_L\leq\mathbf{I}}\delta(\mathbf{Y}_L)+ \frac{\mu_L}{2}\|\mathbf{Y}_L-\mathbf{L}\|^2,\\
\tilde{\sigma}(\mathbf{P};\mathbf{Y}_P):=\min\limits_{0\leq\mathbf{Y}_P\leq\mathbf{I}}\sigma(\mathbf{Y}_P)+ \frac{\mu_P}{2}\|\mathbf{Y}_P-\mathbf{P}\|^2,
\end{array}\label{eq:hq-prior}
\end{equation}
where $\mu_L,\mu_P>0$ are penalty parameters. Then our task-aware image separation can be summarized as
\begin{equation}
\left\{\begin{array}{l}
\mathbf{L}^+ = \frac{1}{1+\mu_L}(\mathbf{I}-\mathbf{P} +\mu_L\mathbf{Y}_L^+), \ \mbox{with} \ \mathbf{Y}_L^+ = \mathcal{A}_{\mathbf{L}}(\mathbf{L}),\\
\mathbf{P}^+ = \frac{1}{1+\mu_P}(\mathbf{I}-\mathbf{L}^+ +\mu_P\mathbf{Y}_P^+), \ \mbox{with} \ \mathbf{Y}_P^+ = \mathcal{A}_{\mathbf{P}}(\mathbf{P}),\\
(\mu_L^+,\mu_P^+)=\eta(\mu_L,\mu_P), \ \mbox{with} \ \eta>1.
\end{array}\right.\label{eq:pp-hq}
\end{equation}
We will demonstrate in the following section that the operators $(\mathcal{A}_{\mathbf{L}}, \mathcal{A}_{\mathbf{P}})$ are related to specific tasks. Notice that we also composite a normalization process provided in \cite{li2014single} to these two operations
to fit the general bound constraints in Eq.~\eqref{eq:task-aware}. To end this part, we provide a theorem to claim that the proposed iterations will always converge to a fixed-point.
\begin{thm}(Fixed-point Convergent)\label{thm:convergence}
	The proposed propagations of task-aware image layers separation are globally convergent\footnote{Here ``globally convergent'' indicates that the whole sequence (but not any subsequences) generated by Eq.~\eqref{eq:pp-hq} is converged. Notice that this concept is fundamental and has been widely used in non-convex optimization society (see~\cite{bolte2014proximal} for example)}. That is, the sequence $\{(\mathbf{L}^k,\mathbf{P}^k,\mathbf{Y}_L^k,\mathbf{Y}_P^k)\}$ generated by Eq.~\eqref{eq:pp-hq} is a Cauchy sequence, thus converge globally to a fixed-point.
\end{thm}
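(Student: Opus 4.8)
The plan is to prove that the four-block sequence $\{(\mathbf{L}^k,\mathbf{P}^k,\mathbf{Y}_L^k,\mathbf{Y}_P^k)\}$ is Cauchy in the complete Euclidean space $\mathbb{R}^N\times\mathbb{R}^N\times\mathbb{R}^N\times\mathbb{R}^N$ and then to pass to the limit in the update. First I would record that every iterate stays in the compact box $\{\mathbf{z}:0\leq\mathbf{z}\leq\mathbf{I}\}$: the constraints $0\leq\mathbf{L},\mathbf{P}\leq\mathbf{I}$ in \eqref{eq:task-aware} together with the normalization composited into $\mathcal{A}_{\mathbf{L}}$ and $\mathcal{A}_{\mathbf{P}}$ force $\mathbf{Y}_L^+,\mathbf{Y}_P^+$, and hence the convex-combination outputs $\mathbf{L}^+,\mathbf{P}^+$ in \eqref{eq:pp-hq}, to remain feasible. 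Consequently there is a uniform constant $R$ bounding all iterates and all residual quantities such as $\|\mathbf{I}-\mathbf{P}^k\|$ and $\|\mathbf{I}-\mathbf{L}^{k+1}\|$.

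Next I would exploit the penalty schedule. Since $(\mu_L^+,\mu_P^+)=\eta(\mu_L,\mu_P)$ with $\eta>1$, both $\mu_L^k=\eta^k\mu_L^0$ and $\mu_P^k=\eta^k\mu_P^0$ diverge, so the data-fidelity weights $a_k:=1/(1+\mu_L^k)$ and $b_k:=1/(1+\mu_P^k)$ decay geometrically, $a_k,b_k=O(\eta^{-k})$. Rewriting \eqref{eq:pp-hq} as the convex combinations $\mathbf{L}^{k+1}=(1-a_k)\mathcal{A}_{\mathbf{L}}(\mathbf{L}^k)+a_k(\mathbf{I}-\mathbf{P}^k)$ and $\mathbf{P}^{k+1}=(1-b_k)\mathcal{A}_{\mathbf{P}}(\mathbf{P}^k)+b_k(\mathbf{I}-\mathbf{L}^{k+1})$, I would subtract consecutive iterates. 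Using the uniform bound $R$ to absorb the terms carrying $|a_k-a_{k-1}|$, $|b_k-b_{k-1}|$, $a_k$ and $b_k$ into a geometrically summable perturbation $\varepsilon_k=O(\eta^{-k})$, and using that $\mathcal{A}_{\mathbf{L}},\mathcal{A}_{\mathbf{P}}$ (a task operator followed by the nonexpansive box normalization) are nonexpansive, I obtain a coupled recursion $d_{k+1}\leq\gamma\,d_k+\varepsilon_k$ for the joint increment $d_k:=\|\mathbf{L}^k-\mathbf{L}^{k-1}\|+\|\mathbf{P}^k-\mathbf{P}^{k-1}\|$, with a contraction factor $\gamma\in[0,1)$ inherited from the operators.

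From this recursion, a standard induction gives $\sum_k d_k<\infty$; the same bound transfers to the auxiliary blocks because $\mathbf{Y}_L^{k+1}=\mathcal{A}_{\mathbf{L}}(\mathbf{L}^k)$ and $\mathbf{Y}_P^{k+1}=\mathcal{A}_{\mathbf{P}}(\mathbf{P}^k)$ are Lipschitz images of summable increments. Summable successive differences make $\{(\mathbf{L}^k,\mathbf{P}^k,\mathbf{Y}_L^k,\mathbf{Y}_P^k)\}$ a Cauchy sequence, so it converges to a single limit $(\mathbf{L}^\star,\mathbf{P}^\star,\mathbf{Y}_L^\star,\mathbf{Y}_P^\star)$; this is whole-sequence (global) convergence in the sense used in \cite{bolte2014proximal}, not merely subsequential. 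Finally, passing to the limit in \eqref{eq:pp-hq} and using $a_k,b_k\to0$ together with the continuity of $\mathcal{A}_{\mathbf{L}},\mathcal{A}_{\mathbf{P}}$ shows the limit obeys $\mathbf{L}^\star=\mathcal{A}_{\mathbf{L}}(\mathbf{L}^\star)$, $\mathbf{P}^\star=\mathcal{A}_{\mathbf{P}}(\mathbf{P}^\star)$, $\mathbf{Y}_L^\star=\mathbf{L}^\star$ and $\mathbf{Y}_P^\star=\mathbf{P}^\star$, i.e.\ it is a fixed point of the iteration.

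The hard part will be the increment bound: securing the factor $\gamma<1$ for the joint map despite the Gauss--Seidel coupling, where the current $\mathbf{P}^k$ enters the $\mathbf{L}$-update and the just-updated $\mathbf{L}^{k+1}$ enters the $\mathbf{P}$-update. I expect this to require $\mathcal{A}_{\mathbf{L}},\mathcal{A}_{\mathbf{P}}$ to be not merely nonexpansive but genuinely contractive (or firmly nonexpansive, with the box normalization supplying the needed averaging), so that the coupling terms weighted by $a_k,b_k\to0$ cannot inflate $\gamma$ back to $1$; the geometric penalty schedule is precisely what keeps those coupling perturbations summable. If only nonexpansiveness is available, I would instead cast \eqref{eq:pp-hq} as an inexact Krasnoselskii--Mann iteration with summable errors, or invoke the Kurdyka--{\L}ojasiewicz machinery of \cite{bolte2014proximal} on the underlying energy \eqref{eq:task-aware-hq}, to upgrade boundedness plus vanishing increments into whole-sequence convergence.
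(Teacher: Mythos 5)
Your overall strategy (boundedness from the box constraint, geometric growth of the penalties, summable successive differences, hence Cauchy) matches the paper's in spirit, but the route you take to the increment bound is different and contains the gap you yourself flag: you funnel everything through a recursion $d_{k+1}\leq\gamma d_k+\varepsilon_k$ and then need a uniform contraction factor $\gamma<1$ for the operators $\mathcal{A}_{\mathbf{L}},\mathcal{A}_{\mathbf{P}}$. That factor is not available here. In your own convex-combination rewriting the coefficient multiplying $\|\mathbf{L}^k-\mathbf{L}^{k-1}\|$ is $1-a_k=\mu_L^k/(1+\mu_L^k)\to 1$, so with mere nonexpansiveness the recursion only yields bounded increments, not summable ones, and nothing in the construction (a task-dependent "plug-and-play'' operator composited with a normalization) guarantees genuine contractivity. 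Your fallbacks (inexact Krasnoselskii--Mann, KL machinery) are heavier tools than the problem requires and are not what the paper does.

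The missing idea is that no contraction is needed at all, because each step moves a geometrically vanishing amount \emph{unconditionally}. Since $\mathbf{Y}_L^{k+1}=\mathcal{A}_{\mathbf{L}}(\mathbf{L}^k)$ is by Eq.~\eqref{eq:hq-prior} a minimizer of $\delta(\cdot)+\frac{\mu_L^k}{2}\|\cdot-\mathbf{L}^k\|^2$ over the compact box $[0,\mathbf{I}]$, comparing its objective value with that of $\mathbf{L}^k$ itself gives $\frac{\mu_L^k}{2}\|\mathbf{Y}_L^{k+1}-\mathbf{L}^k\|^2\leq \delta(\mathbf{L}^k)-\delta(\mathbf{Y}_L^{k+1})\leq C_L$, i.e.\ the proximal displacement is $O(1/\sqrt{\mu_L^k})$ with no Lipschitz hypothesis on $\mathcal{A}_{\mathbf{L}}$. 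Likewise the optimality condition of the quadratic subproblem in Eq.~\eqref{eq:task-aware-hq} gives $\|\mathbf{L}^{k+1}-\mathbf{Y}_L^{k+1}\|=\|\nabla_{\mathbf{L}}f\|/\mu_L^k=O(1/\mu_L^k)$ using boundedness of the iterates. The triangle inequality then bounds $\|\mathbf{L}^{k+1}-\mathbf{L}^k\|$ (and, by the same two estimates applied twice, $\|\mathbf{Y}_L^{k+1}-\mathbf{Y}_L^k\|$) by a quantity of order $\eta^{-k/2}$, which is geometrically summable since $\eta>1$; the $\mathbf{P}$ and $\mathbf{Y}_P$ blocks are handled identically. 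This directly yields the Cauchy property for the joint sequence without ever comparing $\mathcal{A}(\mathbf{L}^k)$ to $\mathcal{A}(\mathbf{L}^{k-1})$, which is precisely the comparison your argument cannot control. Your final limit-identification step is fine and is, if anything, more explicit than the paper's.
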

\begin{proof}
	It is easy to check that the constraint $0\leq\mathbf{L},\mathbf{P}\leq\mathbf{I}$ 
	actually enforces a bound assumption to the operations $\mathcal{A}_{\delta}$ and $\mathcal{A}_{\sigma}$.
	So we can have the following two inequalities 
	\begin{equation}
	\|\mathcal{A}_{\delta}(\mathbf{L})-\mathbf{L}\|^2\leq \frac{C_L}{\mu_L} \ \mbox{and} \ \|\mathcal{A}_{\sigma}(\mathbf{P})-\mathbf{P}\|^2\leq \frac{C_P}{\mu_P},
	\end{equation}
	where $0<C_L, C_P<\infty$ are two constants.
	By optimal conditions of Eq.~\eqref{eq:task-aware-hq}, we further have that
	\begin{equation}
	\begin{array}{l}
	\|\mathbf{L}^{k+1}-\mathbf{Y}_L^{k+1}\| =\frac{1}{\mu_L^k}\|\nabla_{\mathbf{L}} f(\mathbf{L},\mathbf{P}^k)\|\leq \frac{C}{\mu_L^k},\\
	\|\mathbf{P}^{k+1}-\mathbf{Y}_P^{k+1}\| =\frac{1}{\mu_P^k}\|\nabla_{\mathbf{P}} f(\mathbf{L}^{k+1},\mathbf{P})\|\leq \frac{C}{\mu_P^k},
	\end{array}
	\end{equation}
	where $f(\mathbf{L},\mathbf{P})=\frac{1}{2}\|\mathbf{I}-\mathbf{L}-\mathbf{P}\|^2$ and $0<C<\infty$ is a constant.
	So it can be derived that 
	\begin{equation}
	\begin{array}{l}
	\quad\|\mathbf{L}^{k+1}-\mathbf{L}^k\| \leq \|\mathbf{L}^{k+1}-\mathbf{Y}_L^{k+1}\| + \|\mathbf{Y}_L^{k+1}-\mathbf{L}^k\|\\
	\leq  \frac{C}{\mu_L^{k}} + \|\mathcal{A}_{\delta}(\mathbf{L}^k)-\mathbf{L}^k\| \leq \frac{C+C_L}{\mu_L^k}.
	\end{array}\label{eq:L}
	\end{equation}
	\begin{equation}
	\begin{array}{l}
	\quad\|\mathbf{Y}_L^{k+1}-\mathbf{Y}_L^k\|=\|\mathcal{A}_{\delta}(\mathbf{L}^k)-\mathcal{A}_{\delta}(\mathbf{L}^{k-1})\|\\
	\leq\sum\limits_{j=k-1}^{k}\|\mathcal{A}_{\delta}(\mathbf{L}^j)- \mathbf{L}^j\| +\| \mathbf{L}^k- \mathbf{L}^{k-1}\| \\
	\leq \frac{C}{\mu_L^k} + \frac{C+C_L}{\mu_L^{k-1}}+\frac{C}{\mu_L^{k-1}}=\frac{(2\beta+1)C+\beta C_L}{\mu_L^{k}}.
	\end{array}\label{eq:Y}
	\end{equation}
	Using the same methodology as that in Eq.~\eqref{eq:L}-\eqref{eq:Y}, we also have the finite length property for $\{\mathbf{P}^k\}$ and $\{\mathbf{Y}_L^k\}$, i.e.,
	\begin{equation}
	\begin{array}{l}
	\|\mathbf{P}^{k+1}-\mathbf{P}^k\| \leq\frac{C+C_P}{\mu_P^k},\\
	\|\mathbf{Y}_L^{k+1}-\mathbf{Y}_L^k\|\leq\frac{(2\beta+1)C+\beta C_P}{\mu_P^{k}}.
	\end{array}  
	\end{equation}
	Thus we have that $\|\mathbf{L}^{k+1}-\mathbf{L}^k\|\to 0$, $\|\mathbf{Y}_L^{k+1}-\mathbf{Y}_L^k\|\to 0$,
	$\|\mathbf{P}^{k+1}-\mathbf{P}^k\|\to 0$ and $\|\mathbf{Y}_P^{k+1}-\mathbf{Y}_P^k\|\to 0$
	as $k\to\infty$. So  $\{(\mathbf{L}^k,\mathbf{P}^k,\mathbf{Y}_L^k,\mathbf{Y}_P^k)\}$ is a Cauchy sequence, thus is globally converged to a fixed-point, which concludes our proof. 
\end{proof}

In the following, we demonstrate how to apply the proposed image separation based extension of DPATN for more challenging image enhancement tasks, such as underwater image enhancement and single image rain removal.

\subsection{Underwater Image Enhancement}

%

As shown in~\cite{Chiang2012restore}, we can directly use background light to approximate the true in-scattering term in the full radiative transport equation and thus achieve the following underwater imaging model
\begin{equation}
\left\{\begin{array}{l}
\mathbf{I}_c(x)=\mathbf{J}_c(x)t_c(x) + \mathbf{B}_c(1-t_c(x)),\\
t_c(x)=\exp(-\beta_c d(x)),
\end{array}\right.\label{eq:underwater}
\end{equation}
where $\mathbf{B}=\{\mathbf{B}_{c}\}_{c\in\{r,g,b\}}$ is the homogeneous background light of different channels. Notice that all the variables in Eq.~\eqref{eq:underwater} are suffering from the effects of both light scattering and color changes by light with wavelength $c\in\{r,g,b\}$ and thus $t_c$ should be considered as a function of both wavelength and the object-camera distance. We observe that the above underwater imaging model
shares similar formulation with the atmospheric scattering model in Eq.~\eqref{eq:haze} at each wavelength domain. So it is possible to apply DPATN to enhance the quality of underwater degraded images.

It is known that both light scattering and color change are sources of distortion for underwater photography.
So different from standard dehazing task, it is necessary to incorporate image layer separation process in Eq.~\eqref{eq:task-aware} to address color shift on underwater images. Inspired by the color constancy assumption in \cite{li2014single},
the following cross-channel constraint is incorporated into Eq.~\eqref{eq:task-aware} to balance the range of intensity values in RGB channels
\begin{equation}
	\sum\limits_{x}\bar{\mathbf{J}}_r(x)=
	\sum\limits_{x}\bar{\mathbf{J}}_g(x)=\sum\limits_{x}\bar{\mathbf{J}}_b(x).
\end{equation}
As for prior functions, we follow \cite{li2014single}
to specify $\delta(\mathbf{L})=\min(\nabla\mathbf{L}(x)^2,\tau)$ to preserve large gradients
while $\sigma(\mathbf{P})=\|\Delta\mathbf{P}\|^2$ to smooth the color shift. Then the transmission propagation in Eq.~\eqref{eq:model} are performed
based on $\mathbf{L}$ to enhance the given underwater images.

As red light attenuates faster than blue counterpart in underwater propagation, the degraded images often show bluish tone rather than white tone in hazy images. To avoid interference from white objects and get a preciser $\mathbf{B}_c$ in initial prior submodule, we build a subset with the brightest 0.1\% pixels of the image (denoted as $\Omega$) and calculate $\mathbf{B}$
as follows
\begin{equation}
\mathbf{B}=\mathbf{I}\left(\arg\max\limits_{x\in\Omega}\left(\mathbf{I}^c(x)-\mathbf{I}^r(x)\right)\right), \quad c\in\{g, b\}.
\end{equation}

\subsection{Single Image Rain Removal}

Single image rain removal is a challenging task, as raindrops always
obstruct background scenes, resulting in several types of visibility degradations. Most existing models often directly decompose images as a rain layer and a background layer. However, due to the intrinsic overlapping between rain streaks and background texture patterns, such simple separation model is insufficient to cover some important factors in real rain images. For example, it is observed that distant rain streaks indeed accumulate and generate atmospheric veiling effects similar to mist or fog, which severely reduce the visibility by scattering light out and into the line of sight.

Therefore, it is more reasonable to utilize our generalized atmospheric scattering model Eq.~\eqref{eq:ge-haze} to formulate the rainy observation $\mathbf{I}$, in which $t$ is still the transmission, while $\mathbf{P}$ and $\mathbf{J}$
denote the rain streaks and background scene, respectively.
Then the deraining task can be considered as removing the rain layer $\mathbf{P}$
from the observation $\mathbf{I}$, estimating the transmission $t$ and then recovering the latent background $\mathbf{J}$ based on Eq.~\eqref{eq:ge-haze}. So we can address this task by extending DPATN with layer separation.
As for the prior on $\mathbf{P}$, we just adopt a Gaussian mixture model (GMM) \cite{li2016rain} to investigate the implicit distributions of the rain streaks. Specifically, we follow~\cite{li2016rain} to define $\sigma(\mathbf{P})=\sum_{x\in\Omega}\log\mathcal{G}(\mathcal{H}(\mathbf{P}(x)))$,
where $\mathcal{H}(\mathbf{P}(x))$ is to extract the patch around $\mathbf{P}(x)$ (also remove its mean) and $\mathcal{G}(\cdot)$ stands for the GMM model.

\section{Experimental Results}\label{sec:exp}

\begin{figure*}[!htbp]
	\begin{center}
			\begin{tabular}{c@{\extracolsep{0.3em}}c@{\extracolsep{0.3em}}c@{\extracolsep{0.3em}}c@{\extracolsep{0.3em}}c}
	\multirow{2}{*}{\includegraphics[width=.19\textwidth]{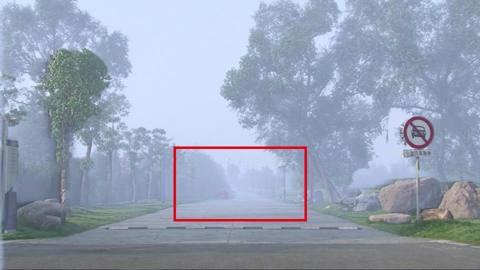}}
	&\includegraphics[width=.19\textwidth]{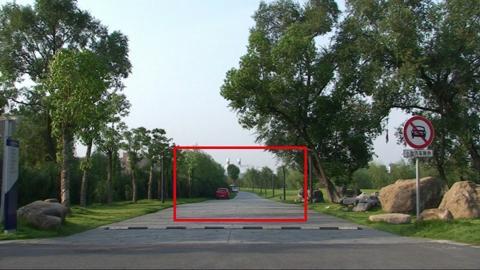}
	&\includegraphics[width=.19\textwidth]{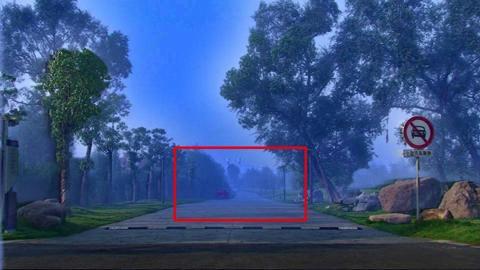}
	&\includegraphics[width=.19\textwidth]{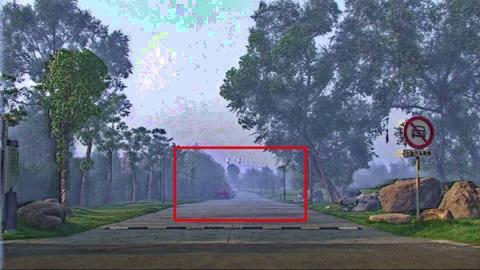}
	&\includegraphics[width=.19\textwidth]{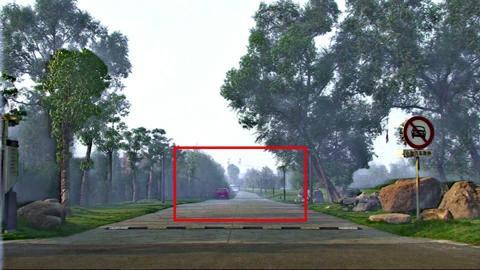}\\
	&\includegraphics[width=.19\textwidth]{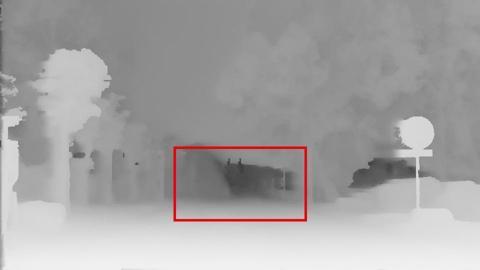}
	&\includegraphics[width=.19\textwidth]{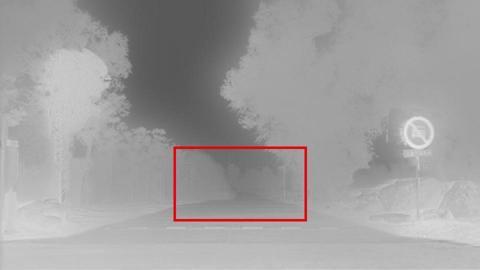}
	&\includegraphics[width=.19\textwidth]{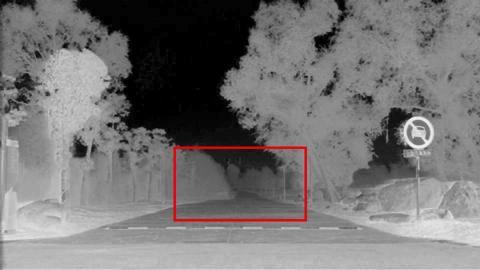}
	&\includegraphics[width=.19\textwidth]{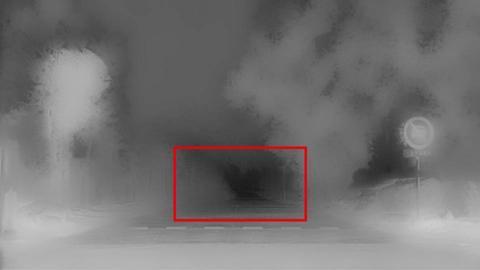}\\
	Input & Ground Truth  &  He~\emph{et al.} & Meng~\emph{et al.}  & Ours
\end{tabular}
	\end{center}
	\caption{Comparison with He~\emph{et al.} and Meng~\emph{et al.} (i.e., physical priors) on synthetic hazy image. The dehazing results and estimated transmission maps are presented on the top and bottom rows, respectively.}\label{fig:physical}
\end{figure*}

We first consider the task of single image haze removal and evaluate the proposed algorithm (i.e., DPATN) on both synthetic datasets~\cite{silberman2012indoor,fattal2014dehazing} and real-world hazy images ~\cite{he2011single,fattal2014dehazing} with comparisons to \emph{eight} recent
state-of-the-art methods, including prior driven approaches (e.g., \cite{he2011single,zhu2015fast,meng2013efficient,berman2016non,li2016haze,chen2016robust}) and fully data driven CNNs (e.g., \cite{cai2016dehazenet,ren2016single}). We use either
their original implementations or the results provided by the authors for fair comparison.

As for DPATN, we cascade 5 residual modules, in which consist of 24 convolutional filters with the size $5\times 5$
and 24 nonlinear activation functions. The filters and activations are respectively initialized by the
average-discarded DCT basis
and the unified influence function $2z/(1+z^2)$. We train DPATN on a synthesized hazy image set, in which observations are simulated by clean images and known depth maps all from NYU depth database~\cite{silberman2012indoor}.
We generate atmospheric light $\mathbf{A} = [a;a;a]$ ($a \in [0.7,1.0]$) and
medium extinction coefficient $\beta \in [0.7,1.2]$
for each clear image. Then the corresponding synthesized hazy images are generated by Eq.~\eqref{eq:haze}.
Finally, we crop a $180 \times 180$ region from each transmission, resulting in our training set with images all of the size $180 \times 180$.
The parameters of DPATN are experimentally set as follows:
In the phase of latent scene radiance recovery, we set $\check{\alpha} \in [1.5,6 ]$, $\hat{\alpha}=1.5$ in Eq.~\eqref{eq:prior}, and $\epsilon=0.01$ in Eq.~\eqref{eq:j}. As for image layer separation formulation, we initialize the penalty parameters $ \mu_L=0.1$ and $\mu_P=0.5$, and then update them with the ratio $\eta=1.05$ at each iteration.


To perform quantitative comparisons, both Peak Signal-to-Noise Ratio (PSNR) and Structural Similarity (SSIM) metrics
are used to measure the similarity between the restored results and the ground truth (GT).
We also report one non-reference metric, named Natural Image Quality Evaluator (NIQE) \cite{liu2014no} for real-world hazy images (without GTs). Notice that the lower NIQE score indicates the better performance. All the experiences are conducted on a PC with Intel Core i7-3770 CPU at 3.4 GHz, 8 GB RAM and a NVIDIA GeForece GTX 1050 Ti GPU.

	\begin{figure}[!htbp]
	\begin{center}
		\begin{tabular}{c@{\extracolsep{0.3em}}c@{\extracolsep{0.3em}}c@{\extracolsep{0.3em}}
				c@{\extracolsep{0.3em}}c}
			\includegraphics[width=.24\textwidth]{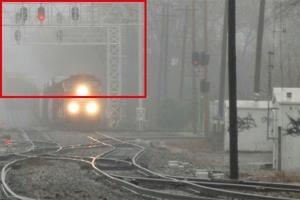}
			&\includegraphics[width=.24\textwidth]{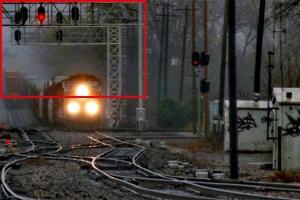}\\
			Input & Berman~\emph{et al.} (2.46) \\
			\includegraphics[width=.24\textwidth]{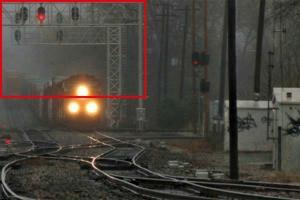}
			&\includegraphics[width=.24\textwidth]{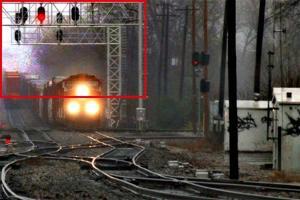} \\
			Zhu~\emph{et al.} (2.73)& Ours (\textbf{2.37})
		\end{tabular}
	\end{center}
	\caption{Comparison with Berman~\emph{et al.} and Zhu~\emph{et al.} (i.e., distribution priors) on real hazy image. The non-reference NIQE scores are reported in brackets below each image.	}\label{fig:distribution}
\end{figure}

\begin{figure}[!htbp]
	\begin{center}
		\begin{tabular}{c@{\extracolsep{0.3em}}c@{\extracolsep{0.3em}}c@{\extracolsep{0.3em}}c@{\extracolsep{0.3em}}c}
			\multirow{2}{*}{\includegraphics[width=.15\textwidth]{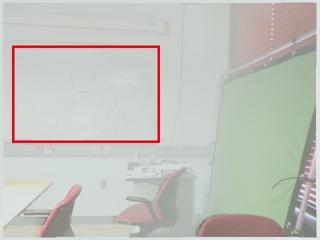}}
			&\includegraphics[width=.15\textwidth]{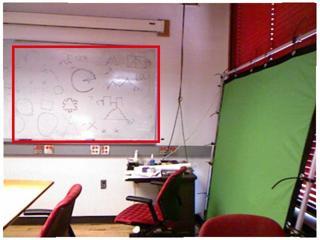}
			&\includegraphics[width=.15\textwidth]{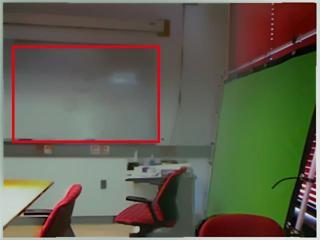}\\
			& Ground Truth  &  Li~\emph{et al.}\\
			&\includegraphics[width=.15\textwidth]{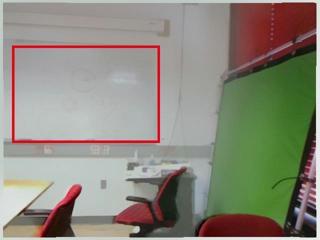}
			&\includegraphics[width=.15\textwidth]{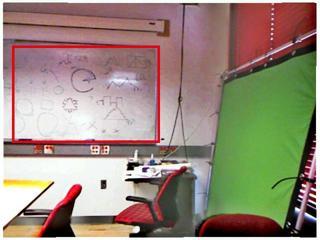}\\
			Input  & Chen~\emph{et al.} & Ours
		\end{tabular}
	\end{center}
	\caption{Comparison with Li~\emph{et al.} and Chen~\emph{et al.} (i.e., variational priors) on synthetic hazy image.}\label{fig:variational}
\end{figure}

\begin{figure*}[!htbp]
	\begin{center}
		\begin{tabular}{c@{\extracolsep{0.3em}}c@{\extracolsep{0.3em}}c@{\extracolsep{0.3em}}c@{\extracolsep{0.3em}}
				c@{\extracolsep{0.3em}}c}
			\includegraphics[width=.19\textwidth]{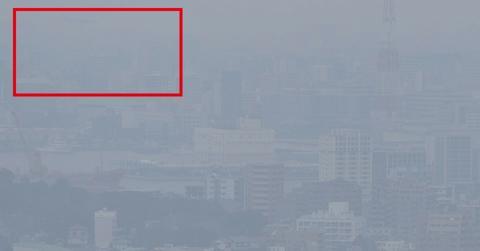}
			&\includegraphics[width=.19\textwidth]{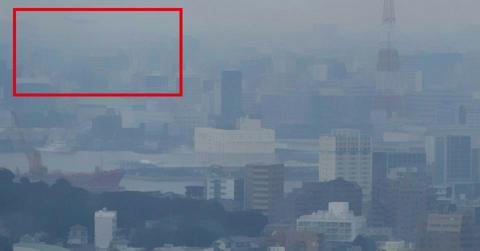}
			&\includegraphics[width=.19\textwidth]{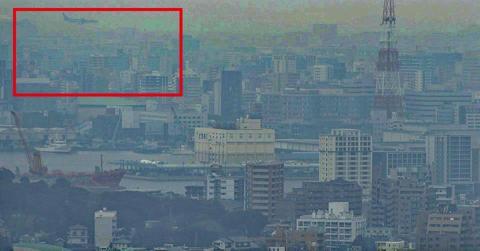}
			&\includegraphics[width=.19\textwidth]{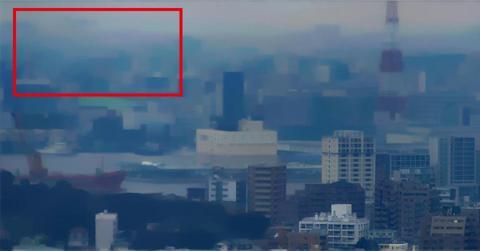}
			&\includegraphics[width=.19\textwidth]{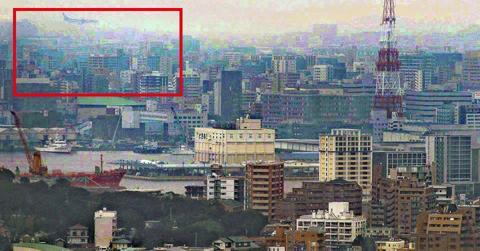}\\
			Input& Chen~\emph{et al.} (5.17)& Meng~\emph{et al.} (3.36) & Li~\emph{et al.} (4.83)& Ours (\textbf{3.16})
		\end{tabular}
		\caption{Comparison with prior based methods on real-world, low-resolution outdoor scene image with extremely dense hazes. The non-reference NIQE scores are reported in brackets below each image. }\label{fig:more}
	\end{center}
\end{figure*}
\begin{figure*}[!htbp]
	\begin{center}
		\begin{tabular}{c@{\extracolsep{0.3em}}c@{\extracolsep{0.3em}}
				c@{\extracolsep{0.3em}}c}
			\includegraphics[width=.24\textwidth]{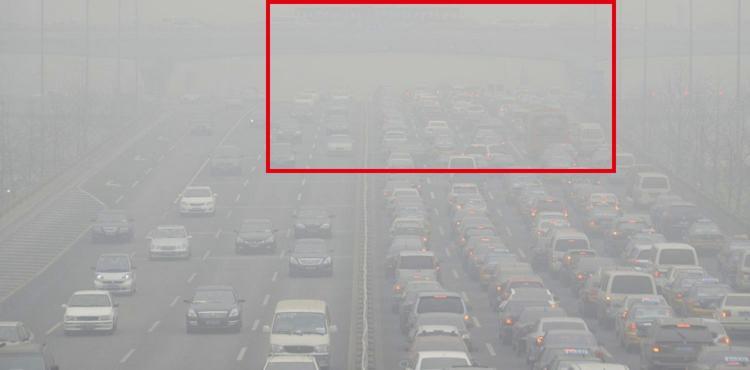}
			&\includegraphics[width=.24\textwidth]{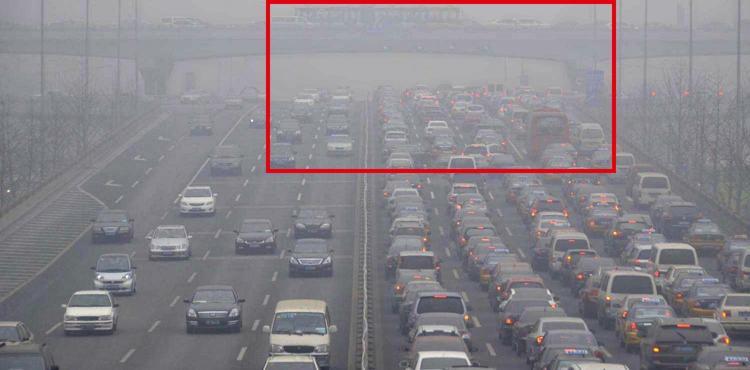}
			&\includegraphics[width=.24\textwidth]{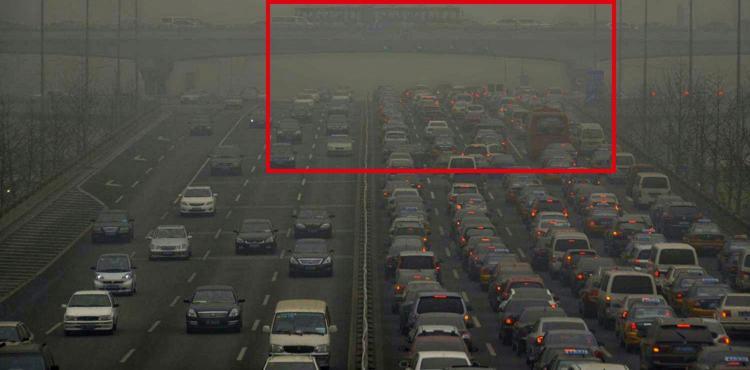}
			&\includegraphics[width=.24\textwidth]{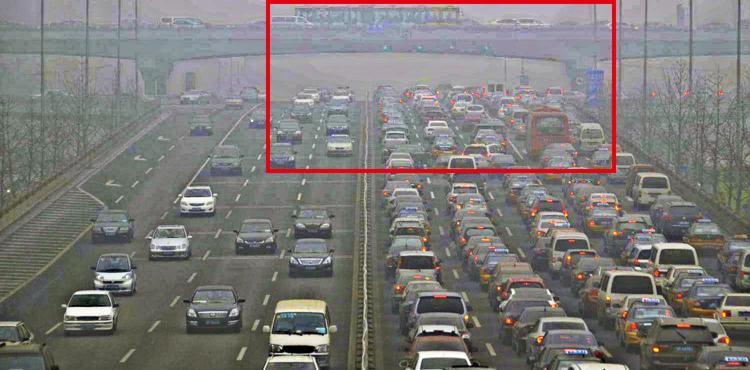}\\
			\includegraphics[width=.24\textwidth]{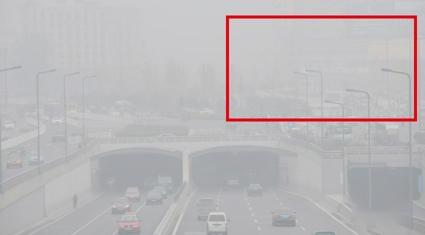}
			&\includegraphics[width=.24\textwidth]{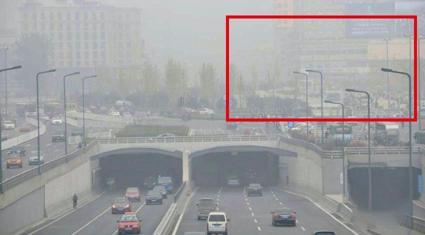}
			&\includegraphics[width=.24\textwidth]{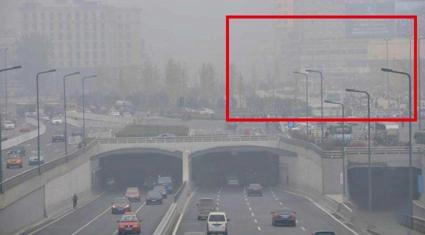}
			&\includegraphics[width=.24\textwidth]{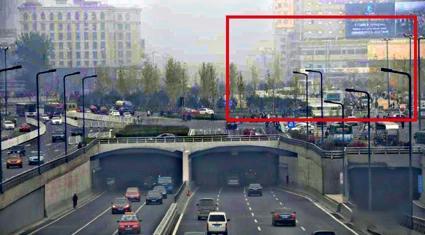}\\
			Input & Ren~\emph{et al.} (3.74 \& 3.04)& Cai~\emph{et al.} (3.74 \& 3.29)& Ours (\textbf{3.53 \& 2.96})
		\end{tabular}
	\end{center}
	\caption{Comparison with Ren~\emph{et al.} and Cai~\emph{et al.} (i.e., data driven deep networks)
		on outdoor scenes with dense hazes. The non-reference NIQE scores are reported in brackets, in which the first and second scores are for results on the top and second rows, respectively.}\label{fig:deep}
\end{figure*}

\subsection{Qualitative Results}
This part conducts experiments on both synthesis and real-world hazy images to
compare performances of our DPATN against state-of-the-art dehazing techniques in qualitative way.
Please notice the detailed comparisons on zoomed in regions in Figures~\ref{fig:physical}, \ref{fig:distribution}, \ref{fig:variational}, \ref{fig:more} and \ref{fig:deep}.

\textbf{Prior Driven Models:} We first compare DPATN with the well-known physical principle based dark channel prior (DCP)~\cite{he2011single} and its recent improvement~\cite{meng2013efficient}) in Figure~\ref{fig:physical}.
We observe that results of DCP have apparent color distortions and serious artifacts. This is because
pixels in sky regions may have high intensity in all three color channels, leading to inexact transmission estimations (see bottom row of Figure~\ref{fig:physical}). In contrast, our DPATN estimates more accurate transmission, thus the restored image is closer to GT. In Figure~\ref{fig:distribution}, we also make a visual comparison on a real hazy image with Berman~\emph{et al.}~\cite{berman2016non} and Zhu~\emph{et al.}~\cite{zhu2015fast}, which can be categorized as distribution prior based approaches.
These two algorithms may not work well on distant region (see sky, lights and trees in red rectangle). While our method successfully removes hazes from the image. Both Li~\emph{et al.}~\cite{li2014contrast} and Chen \emph{et al.}~\cite{chen2016robust} aim to design variational energies to suppress artifacts for haze removal. But we see in Figure~\ref{fig:variational}
that these methods tend to over-smooth local details in dehazing results (see
the white board region). Thanks to the aggregation of both priors and data, DPATN can successfully remove hazes and
preserve rich details in latent image. Finally, we demonstrate results on real-world outdoor scene with
extremely dense hazes in Figure~\ref{fig:more}. It can be seen that most prior-based methods fail on this challenging
example. Though there still exist small artifacts due to low-resolution nature of the input image, details in distant region (e.g., airplane in red rectangle) can still be restored by DPATN.

\begin{table*}[t]
	\caption{Average PSNR / SSIM on the benchmark datasets (i.e., Fattal's benchmark and the newly generated one). As for our dataset (denoted as ``Synth''), we
		denote ``T'', ``M'', ``D'' and ``A'' as subsets of test images with thin, medium, dense hazes and all
		these hazy images, respectively. The numbers of images are listed in the brackets. The best two results are shown in \textcolor[rgb]{1.00,0.00,0.00}{\textbf{red}} and \textcolor[rgb]{0.00,1.00,0.50}{\textbf{green}} fonts.}\label{tab:FHPO2}
	\begin{center}
		\vspace{-5pt}
		\begin{tabular}{|c|c|c|c|c|c|c|c|c|}\hline
			\multicolumn{2}{|c|}{Data}&He~\emph{et al.}&Zhu~\emph{et al.}&Berman~\emph{et al.}&Cai~\emph{et al.}&Ren~\emph{et al.} &Meng~\emph{et al.}&Ours \\
			\hline\hline
			\multicolumn{2}{|c|}{Fattal \cite{fattal2014dehazing} (\#11)}&13.62 / 0.77&\textcolor{green}{\textbf{16.66}} / \textcolor{green}{\textbf{0.80}}&14.94 / 0.79&15.76 / 0.79&15.42 / \textcolor{green}{\textbf{0.80}}&14.30 / 0.79 &\textcolor{red}{\textbf{17.91}} / \textcolor{red}{\textbf{0.86}}\\
			\hline\hline
			\multirow{4}{*}{Synth}
			&T (\#60)&12.32 / \textcolor{green}{\textbf{0.75}}&14.21 / 0.72&15.69 / 0.71&\textcolor{red}{\textbf{16.15}} / \textcolor{red}{\textbf{0.76}}&14.11 / 0.70&13.95 / 0.64&\textcolor{green}{\textbf{15.89}} / \textcolor{red}{\textbf{0.76}}\\\cline{2-9}
			&M (\#60)&11.00 / 0.60& 12.75 / \textcolor{green}{\textbf{0.67}}& 10.20 / 0.60& \textcolor{green}{\textbf{12.79}} / \textcolor{green}{\textbf{0.67}}&10.10 / 0.59&9.21 / 0.57&\textcolor{red}{\textbf{15.78}} / \textcolor{red}{\textbf{0.74}}\\\cline{2-9}
			&D (\#60)&10.31 / 0.51&\textcolor{green}{\textbf{11.8}} / \textcolor{green}{\textbf{0.58}} & 9.49 / 0.49 & 11.70 / 0.56&9.47 / 0.49&9.25 / 0.52&\textcolor{red}{\textbf{15.42}} / \textcolor{red}{\textbf{0.68}}\\\cline{2-9}
			&A (\#180)&11.21 / 0.62&12.95 / 0.66&11.79 / 0.60&\textcolor{green}{\textbf{13.55}} / \textcolor{green}{\textbf{0.67}}&11.23 / 0.59&10.80 / 0.58&\textcolor{red}{\textbf{15.70}} / \textcolor{red}{\textbf{0.73}}\\
			\hline
		\end{tabular}
	\end{center}
\end{table*}

\begin{table*}[!htbp]
	\caption{Average running time (seconds per image) of dehazing methods in testing phase on Fattal's benchmark.}\label{tab:time}
	\begin{center}
		\vspace{-5pt}
		\begin{tabular}{|c|c|c|c|c|c|c|c|c|c|}\hline
			Method&He~\emph{et al.}&Zhu~\emph{et al.}&Berman~\emph{et al.}&Cai~\emph{et al.}&Ren~\emph{et al.} &Meng~\emph{et al.}& Li~\emph{et al.} & Chen~\emph{et al.} &Ours \\
			\hline
			{Time}&17.20s&4.38s&3.73s&5.77s&5.87s&6.95s & 83.44s & 105.13s &6.46s\\
			\hline
		\end{tabular}
	\end{center}
	
\end{table*}

\begin{table*}[!htbp]
	\caption{The quantitative results (i.e., PSNR / SSIM) of DPATN with different training data sizes (i.e., \#30, \#50, \#80) on five example images in Fattal's benchmark. The results of two CNNs based methods (i.e., Ren~\emph{et al.} and Cai~\emph{et al.}) are also reported at the bottom two rows. The best two results are shown in \textcolor[rgb]{1.00,0.00,0.00}{\textbf{red}} and \textcolor[rgb]{0.00,1.00,0.50}{\textbf{blue}} fonts, respectively.}\label{tab:differ_train}
	\begin{center}
		\vspace{-5pt}
		\begin{tabular}{|c|c|c|c|c|c|c|}\hline
			& \emph{church}  & \emph{road1} & \emph{lawn2} & \emph{mansion} & \emph{raindeer} & Average\\ \hline\hline
			Ours (\#30) &15.74 / 0.78&12.59 / 0.68&14.01 / 0.71&16.59 / 0.78&14.29 / 0.76&14.64 / 0.74\\ \hline
			Ours (\#50)&\textcolor{red}{\textbf{18.38}} / \textcolor{red}{\textbf{0.89}}&\textcolor{green}{\textbf{18.16}} / \textcolor{green}{\textbf{0.0.86}}&\textcolor{green}{\textbf{17.43}} / \textcolor{red}{\textbf{0.86}}&\textcolor{red}{\textbf{21.36}} / \textcolor{red}{\textbf{0.94}}&\textcolor{green}{\textbf{18.95}} / 0.80&\textcolor{red}{\textbf{18.86}} / \textcolor{green}{\textbf{0.87}}\\\hline
			Ours (\#80)&\textcolor{green}{\textbf{18.24}} / \textcolor{red}{\textbf{0.89}}&\textcolor{red}{\textbf{18.48}} / \textcolor{red}{\textbf{0.87}}&\textcolor{red}{\textbf{18.15}} / \textcolor{green}{\textbf{0.85}}& \textcolor{green}{\textbf{19.90}} / \textcolor{green}{\textbf{0.93}} & \textcolor{red}{\textbf{19.27}} / \textcolor{red}{\textbf{0.86}}&\textcolor{green}{\textbf{18.81}} / \textcolor{red}{\textbf{0.88}}\\\hline\hline
			Cai~\emph{et al.}&14.98 / 0.78 & 13.62 / 0.73 &13.34 / 0.72 & 16.99 / 0.85 & 18.23 / \textcolor{green}{\textbf{0.82}}&15.43 / 0.78\\\hline
			Ren~\emph{et al.}& 15.01 / \textcolor{green}{\textbf{0.84}} &14.17 / 0.78&14.29 / 0.78&17.97 / 0.89&17.80 / 0.81&15.85 / 0.82\\
			\hline
		\end{tabular}
	\end{center}
\end{table*}

\textbf{Data Driven Deep Models:}
We then compare our DPATN with recently developed deep models (e.g., Ren \emph{et al.}~\cite{ren2016single} and Cai~\emph{et al.}~\cite{cai2016dehazenet}) for natural outdoor scene dehazing in Figure~\ref{fig:deep}.
The performances of two CNNs are not very good on these challenging real-world scenarios. Actually this is not very surprising because
all these networks are established based on intuitions and fully trained on synthetic hazy images. So they indeed completely
ignore priors of hazes and thus the estimated transmissions will have bias if hazes in testing images have
significantly different distributions with training data. In contrast, the propagation of DPATN are not only learned on training data (i.e., $\mathcal{D}$), but also controlled by an \emph{adaptive} domain knowledge based guidance (i.e., $\mathcal{P}$),
thus we actually leverage advantages of different dehazing methodologies. We can see that DPATN can well
remove most hazes and produce a clear scene with vivid color information and rich details, which verify the
efficiency of the proposed aggregation strategy.

\subsection{Quantitative Results}

\textbf{Benchmarks:}
We evaluate DPATN and report quantitative results on different benchmark datasets.
The first one is from Fattel~\cite{fattal2014dehazing} and has been widely used for dehazing evaluation. It contains 11 images, including architecture, natural scenery and indoor images.
We then generate a larger testing dataset with 180 synthetic hazy images using NYU depth database
(completely different from our training images).
Three subsets are further selected from this dataset based on the haze concentration (i.e., thin, medium and dense, each consists of 60 images).

\textbf{Performance Evaluation:}
We report average PSNR and SSIM on two benchmarks in Table~\ref{tab:FHPO2}.
Prior based methods perform relatively well on Fattal's small dataset. In contrast,
the results of deep models are better than conventional ones on subset with thin haze concentrations. For example, the method of Cai~\emph{et al.} achieves the second best results in this case. This is because these CNNs are trained on synthetic images. So they may perform well when testing images share similar haze distributions with their training data. Thanks to the aggregation of data and priors, DPATN archives very good performance on all these benchmark datasets.

\textbf{Running Time:} We report average running time of dehazing methods in testing phase on Fattal's database
in Table~\ref{tab:time}.
One can see that the speed of DPATN is faster than most prior based methods, but a little bit slower than existing CNNs (e.g., Ren~\emph{et al.} and Cai~\emph{et al.}).
This is mainly because their networks are shallower and thus have less convolution operators.
But notice that even with deeper architectures, the number of parameters in DPATN is still extremely less than existing fully data driven deep models. So our training cost is definitely much lower than these CNNs.

\subsection{Ablation Studies}\label{sec:data}

Now we provide analysis on the training phase of DPATN in detail as follows.

\textbf{Training Data Size:}
Generally, existing dehazing networks all require large amounts of data for training.
For example, Ren~\emph{et al.} collected 6,000 images and generated transmissions by 3 different medium extinction coefficients, resulting in
18,000 hazy images. Similarly, dehazing network proposed by Cai~\emph{et al.} required 20,000 training images.
In contrast, by incorporating rich priors into the network, DPATN can yield accurate results with only dozens of training images. To verify this issue, we report quantitative performances of DPATN with different training data size in Table~\ref{tab:differ_train}.
It is observed that DPATN with 50 training pairs performs much better than the 30 case.
However, we cannot achieve significant improvements even when data size increases to 80. Overall,
50 training pairs are sufficient and we follow this setting in all experiments.

\begin{figure}[htp]
	\begin{center}
		\begin{tabular}{c@{\extracolsep{0em}}c}
			\includegraphics[width=0.24\textwidth,
			keepaspectratio]{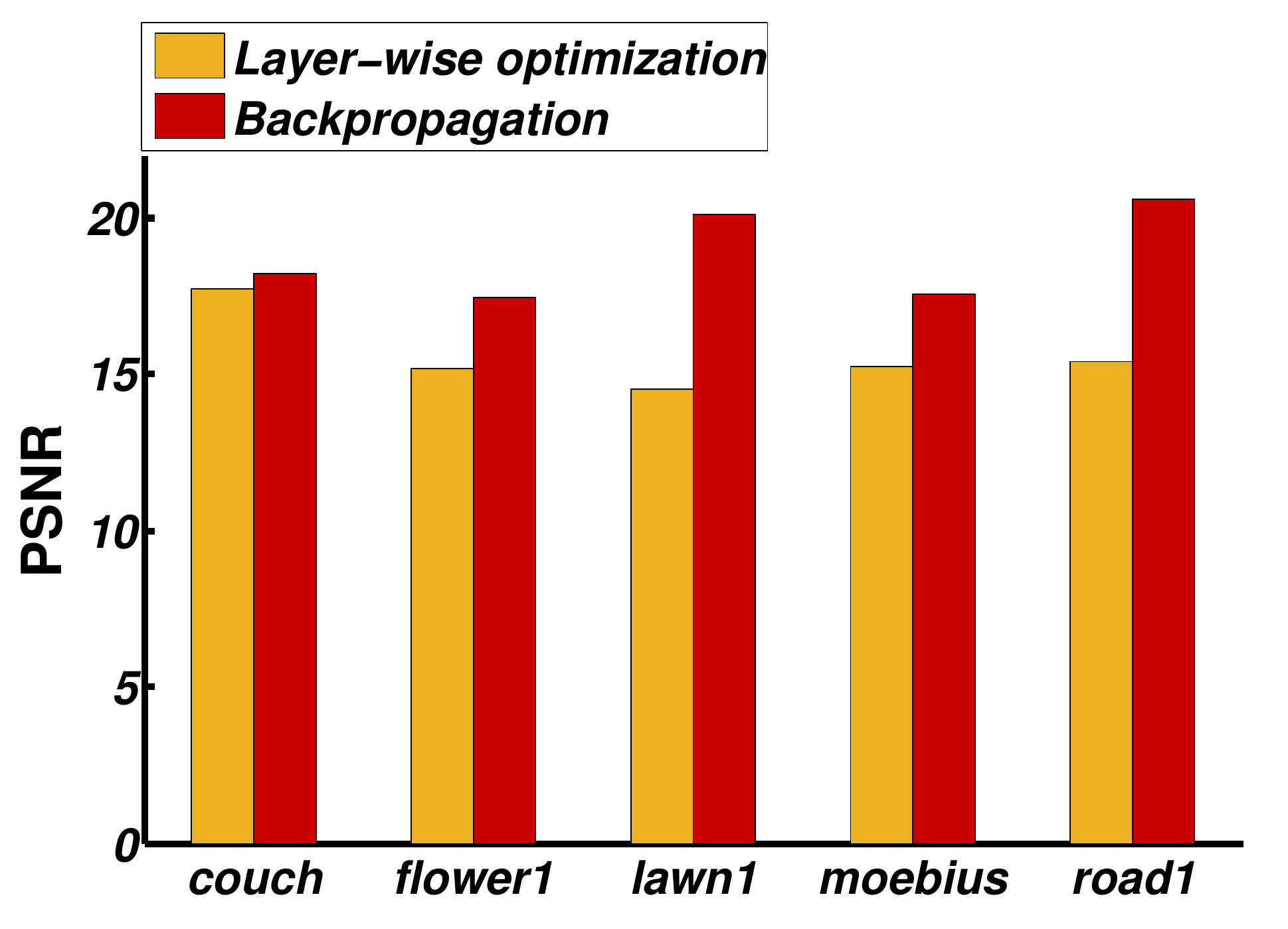}&
			\includegraphics[width=0.24\textwidth,
			keepaspectratio]{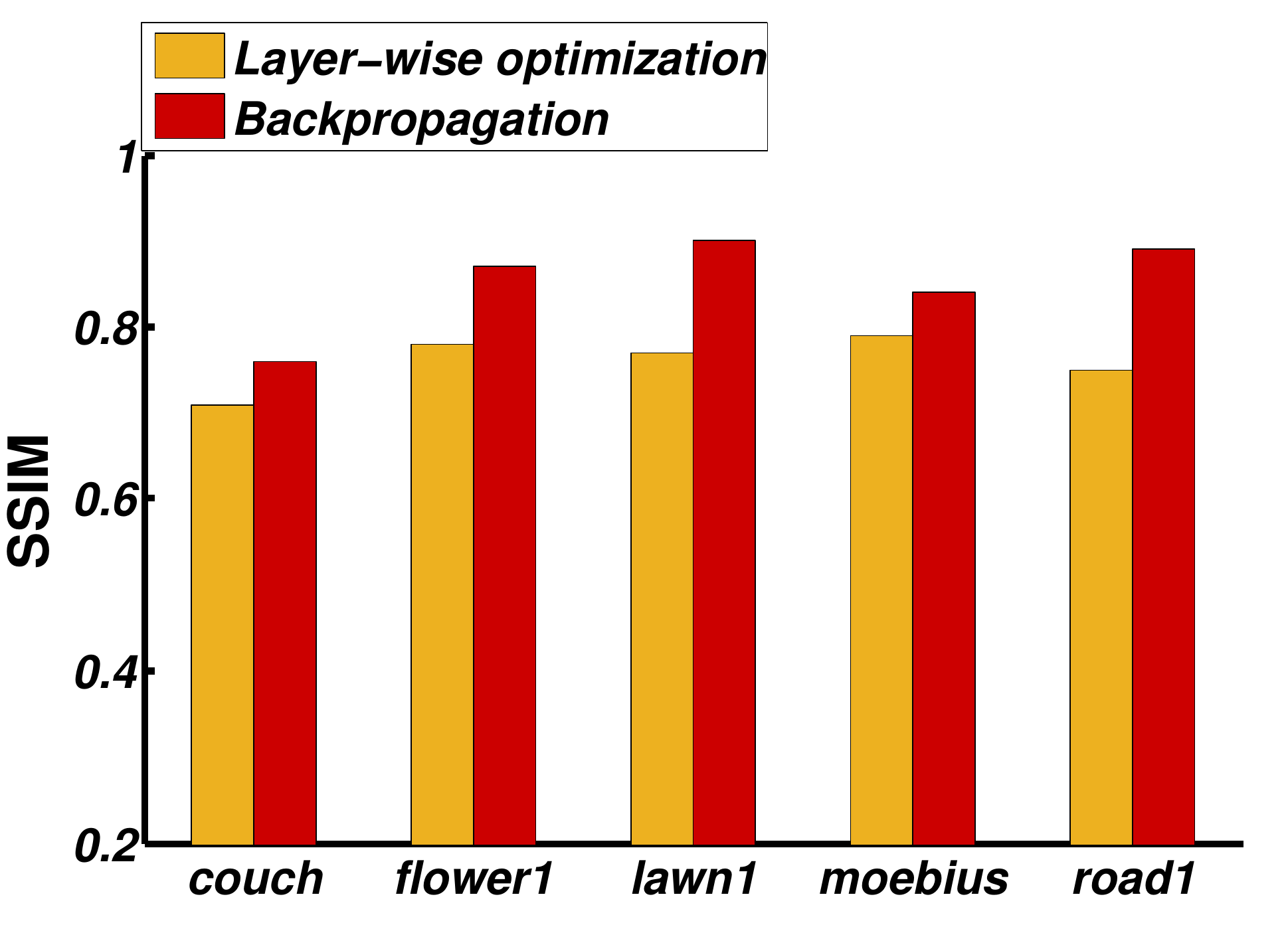}\\
		\end{tabular}
	\end{center}
	\caption{The performances of DPATN with greedy layer-wise optimization and joint backpropagation on example images
		(i.e., \emph{couch}, \emph{flower1}, \emph{lawn1}, \emph{moebius}, \emph{road1}) in Fattal's benchmark.}\label{fig:greedy_joint}
\end{figure}
\textbf{Optimization Strategy:}
In this experiment, we compare two different optimization strategies for DPATN training in practice, i.e., layer-wise optimization and
joint backpropagation. As for the first scheme, we simply calculate gradients of training cost with respect to parameters of each layer separately.
In contrast, the second candidate scheme performs joint gradient descent updating for all the parameters.
The PSNR and SSIM scores of DPATN with two training
strategies are illustrated in Figure~\ref{fig:greedy_joint}. One can observe that the joint strategy
(with layer-wise initialization)
outperforms the greedy one in all test examples. So in this paper we adopt the former strategy for the training of DPATN.

\subsection{Underwater Image Enhancement}

In this part, we test the proposed framework on a more challenging underwater image enhancement task.

\textbf{Training Set:}
Different from standard haze removal problem, which utilize uniformed scattering component, here we follow Eq.~\eqref{eq:underwater}
to generate underwater training data based on NYU depth database. Specifically, as light with different color has different attenuation degree, we generate underwater images by adjusting the background light $\mathbf{B}_r \in [0.05, 0.2]$, $\mathbf{B}_g \in [0.6, 0.8]$, $\mathbf{B}_b \in [0.7, 1.0]$, and medium extinction coefficients $\beta_r\in [0.05, 0.15]$, $\beta_g\in [0.6, 0.9]$, $\beta_b\in [0.7, 1.0]$ in Eq.~\eqref{eq:underwater}. We then crop a $180 \times 180$ region from each transmission map to build our training set.

\textbf{Performance Evaluation:} We first evaluate the efficiency of image separation component for our proposed framework. That is, we compare the performance of DPATN with and without task-aware image separation (i.e., Eq.~\eqref{eq:task-aware}) in
Figure~\ref{fig:preprocess}. The zoomed in results are shown in Figure~\ref{fig:preprocess detail}.
It can be observed that DPATN with task-aware separation can suppress the influence of forward scattering component and remove the haze effect at the edge of the object. Moreover, it also balances the image colors of the final results.
We then compare our complete framework with existing state-of-the-art underwater enhancement algorithms (e.g., Rahman~\emph{et al.}~\cite{rahman1996multi},
Chiang~\emph{et al.}~\cite{Chiang2012restore} and Li~\emph{et al.}~\cite{li2014contrast}) on several challenging degraded underwater
images\footnote{All images are downloaded from \url{https://github.com/agaldran/UnderWater}.}.
It can be seen from Figure~\ref{fig:underwater} and the zoomed in results in Figure~\ref{fig:underwater detail} that though~\cite{rahman1996multi} can remove the blue and green tones well, the haze effect still exists in its results. And~\cite{Chiang2012restore} is very sensitive to the parameters. We can also see that~\cite{li2014contrast} tends to over-smooth local details, especially in the scene far away from the camera. In contrast, our proposed method can simultaneously recover image details and balance the image colors.

	\begin{figure}[!htbp]
	\begin{center}
		\begin{tabular}{c@{\extracolsep{0.3em}}c@{\extracolsep{0.3em}}c@{\extracolsep{0.3em}}
				c@{\extracolsep{0.3em}}c}
			\includegraphics[width=.15\textwidth]{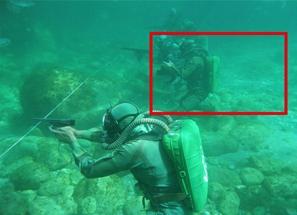}
			&\includegraphics[width=.15\textwidth]{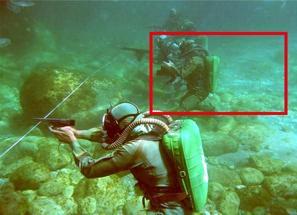}
			&\includegraphics[width=.15\textwidth]{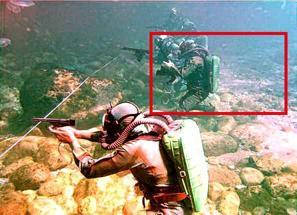}\\
			\includegraphics[width=.15\textwidth]{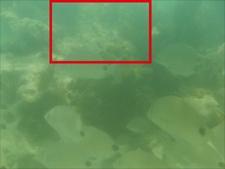}
			&\includegraphics[width=.15\textwidth]{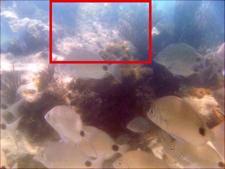}
			&\includegraphics[width=.15\textwidth]{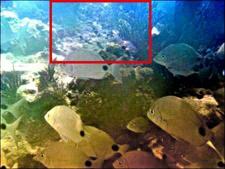}\\
			Input & Ours (WO) & Ours (W)	
		\end{tabular}
	\end{center}
	\caption{The performances of our framework without and with task-aware image separation (respectively denoted as ``WO'' and ``W'') on underwater degraded images.}\label{fig:preprocess}
\end{figure}

\begin{figure}[!htbp]
	\begin{center}
		\begin{tabular}{c@{\extracolsep{0.3em}}c@{\extracolsep{0.3em}}c@{\extracolsep{0.3em}}
				c@{\extracolsep{0.3em}}c}
			\includegraphics[width=.15\textwidth]{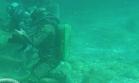}
			&\includegraphics[width=.15\textwidth]{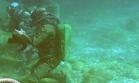}
			&\includegraphics[width=.15\textwidth]{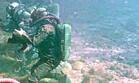}\\
			\includegraphics[width=.15\textwidth]{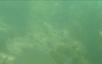}
			&\includegraphics[width=.15\textwidth]{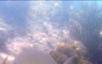}
			&\includegraphics[width=.15\textwidth]{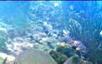}\\
			Input & Ours (WO) & Ours (W)
		\end{tabular}
	\end{center}
	\caption{Zoomed in comparisons on underwater degraded images in Figure~\ref{fig:preprocess}.}\label{fig:preprocess detail}
\end{figure}

\begin{figure*}[t]
	\begin{center}
		\begin{tabular}{c@{\extracolsep{0.2em}}c@{\extracolsep{0.2em}}c@{\extracolsep{0.2em}}c@{\extracolsep{0.2em}}c}
			\includegraphics[width=.18\textwidth]{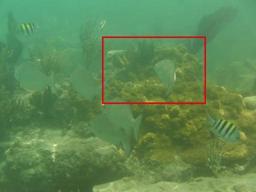}
			&\includegraphics[width=.18\textwidth]{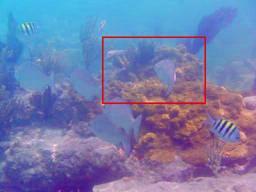}
			&\includegraphics[width=.18\textwidth]{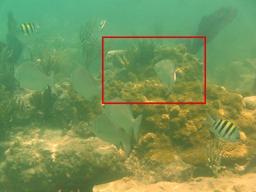}
			&\includegraphics[width=.18\textwidth]{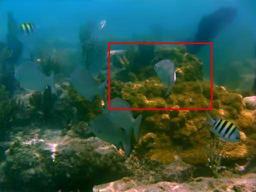}
			&\includegraphics[width=.18\textwidth]{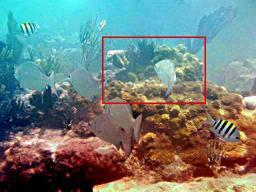}
			\\
			\includegraphics[width=.18\textwidth]{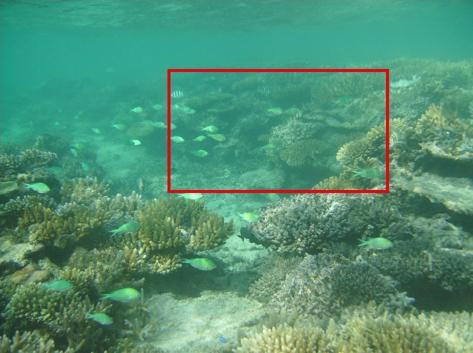}
			&\includegraphics[width=.18\textwidth]{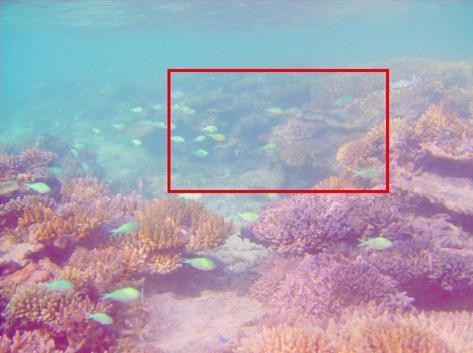}
			&\includegraphics[width=.18\textwidth]{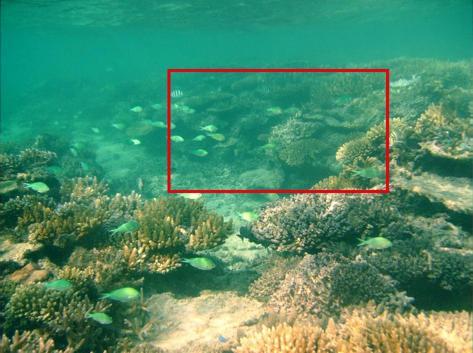}
			&\includegraphics[width=.18\textwidth]{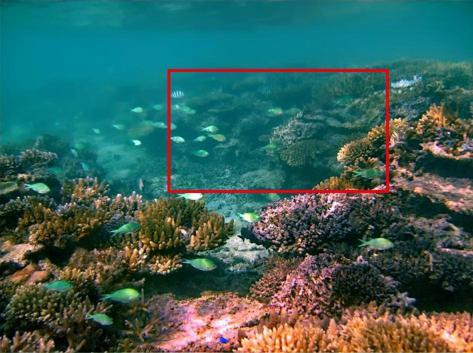}
			&\includegraphics[width=.18\textwidth]{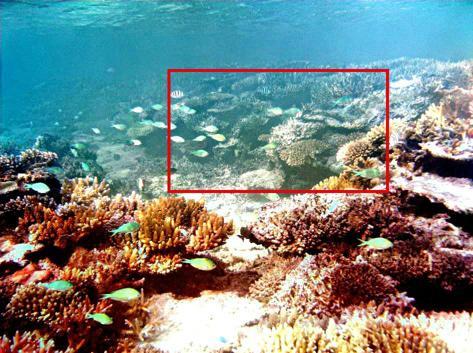}
			\\
			\includegraphics[width=.18\textwidth]{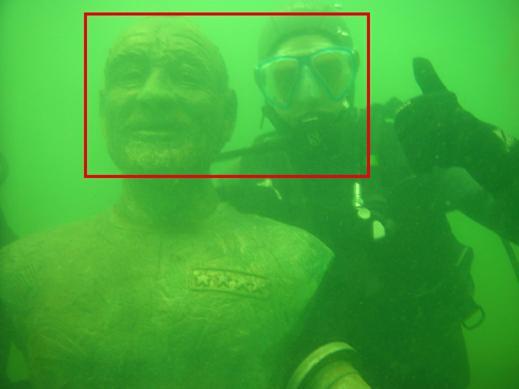}
			&\includegraphics[width=.18\textwidth]{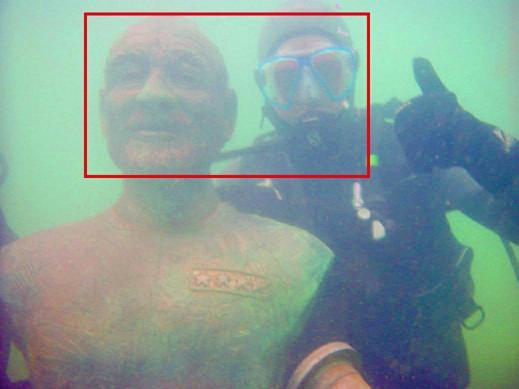}
			&\includegraphics[width=.18\textwidth]{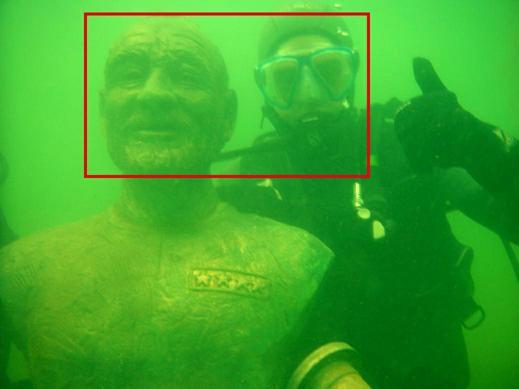}
			&\includegraphics[width=.18\textwidth]{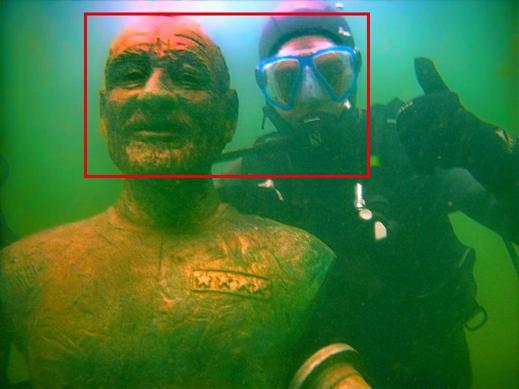}
			&\includegraphics[width=.18\textwidth]{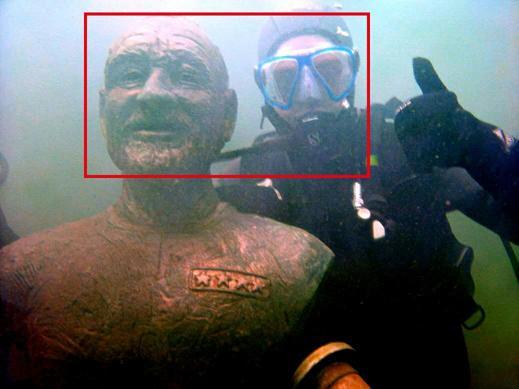}
			\\
			\includegraphics[width=.18\textwidth]{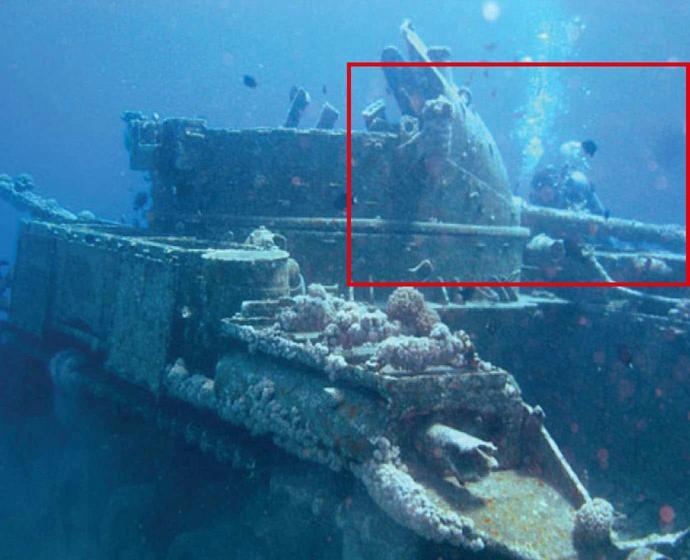}
			&\includegraphics[width=.18\textwidth]{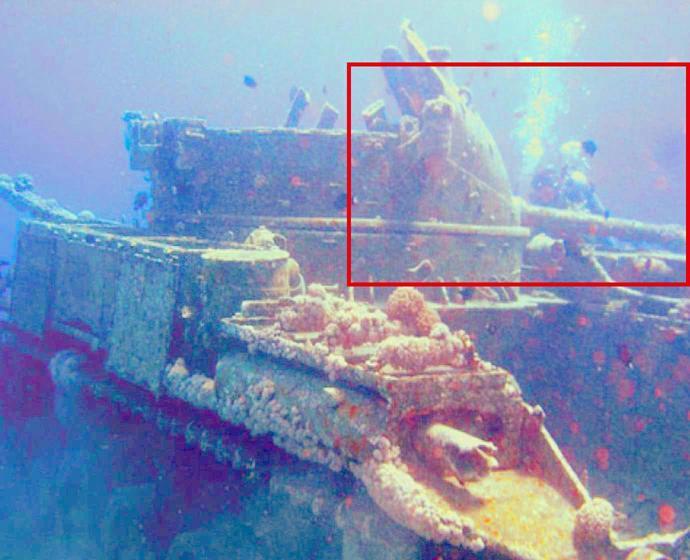}
			&\includegraphics[width=.18\textwidth]{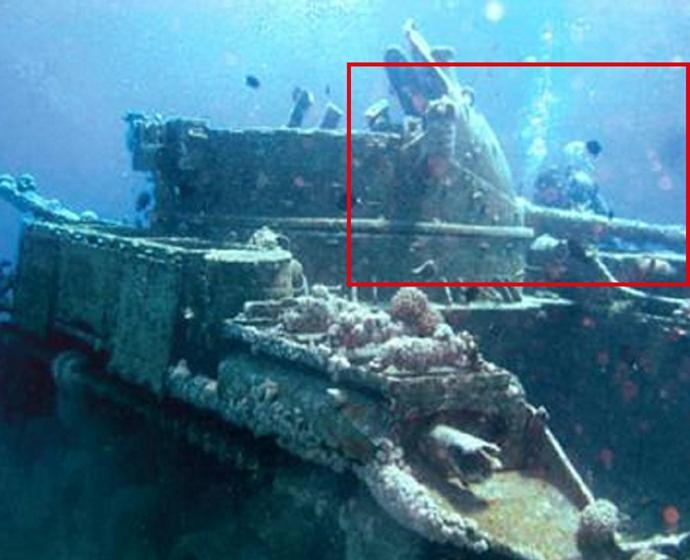}
			&\includegraphics[width=.18\textwidth]{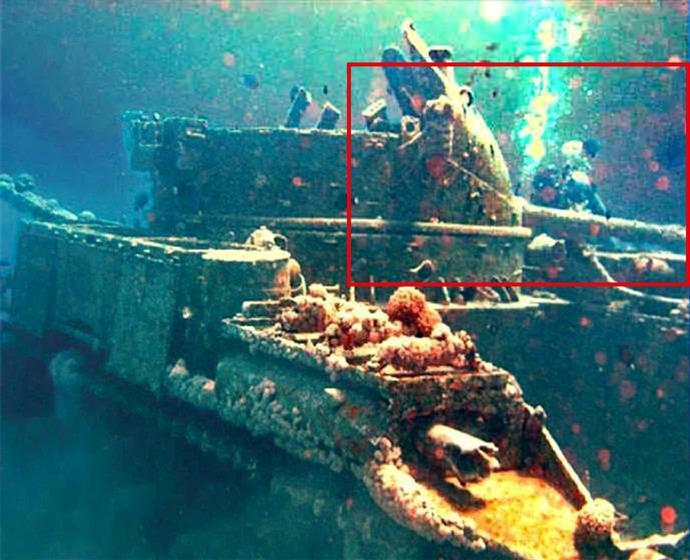}
			&\includegraphics[width=.18\textwidth]{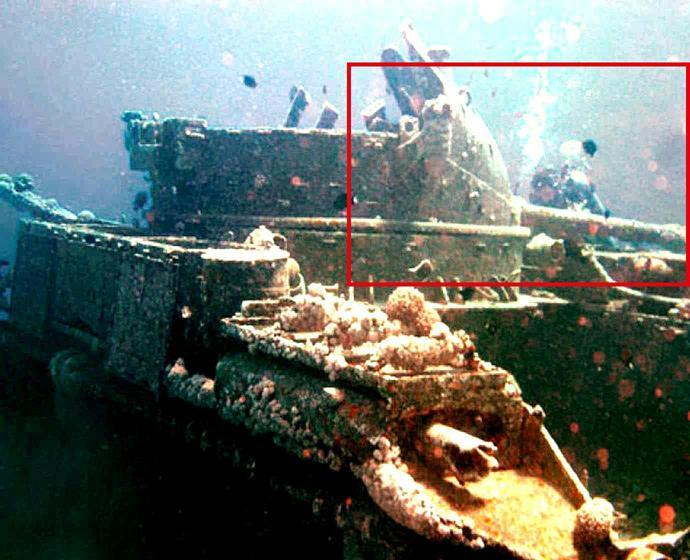}
			\\Input&Rahman~\emph{et al.}&Chiang~\emph{et al.}&Li~\emph{et al.} &Ours
		\end{tabular}
	\end{center}
	\caption{Comparison with Rahman~\emph{et al.}, Chiang~\emph{et al.} and Li~\emph{et al.} on underwater degraded images.}\label{fig:underwater}
\end{figure*}
\begin{figure*}[!htbp]
	\begin{center}
		\begin{tabular}{c@{\extracolsep{0.2em}}c@{\extracolsep{0.2em}}c@{\extracolsep{0.2em}}c@{\extracolsep{0.2em}}c}
			\includegraphics[width=.18\textwidth]{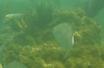}
			&\includegraphics[width=.18\textwidth]{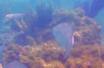}
			&\includegraphics[width=.18\textwidth]{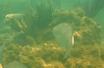}
			&\includegraphics[width=.18\textwidth]{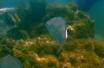}
			&\includegraphics[width=.18\textwidth]{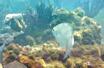}
			\\
			\includegraphics[width=.18\textwidth]{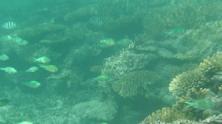}
			&\includegraphics[width=.18\textwidth]{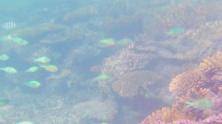}
			&\includegraphics[width=.18\textwidth]{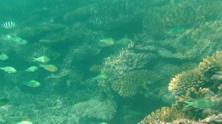}
			&\includegraphics[width=.18\textwidth]{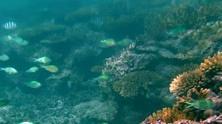}
			&\includegraphics[width=.18\textwidth]{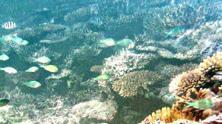}
			\\
			\includegraphics[width=.18\textwidth]{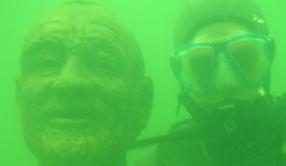}
			&\includegraphics[width=.18\textwidth]{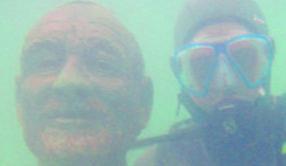}
			&\includegraphics[width=.18\textwidth]{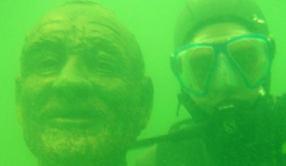}
			&\includegraphics[width=.18\textwidth]{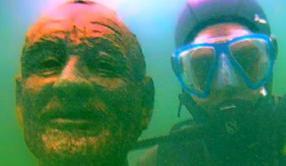}
			&\includegraphics[width=.18\textwidth]{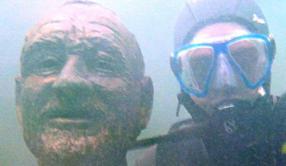}
			\\
			\includegraphics[width=.18\textwidth]{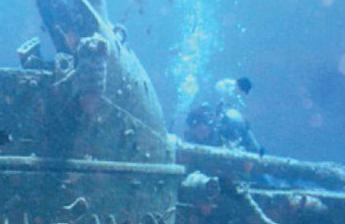}
			&\includegraphics[width=.18\textwidth]{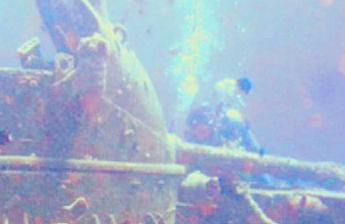}
			&\includegraphics[width=.18\textwidth]{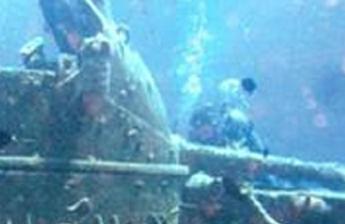}
			&\includegraphics[width=.18\textwidth]{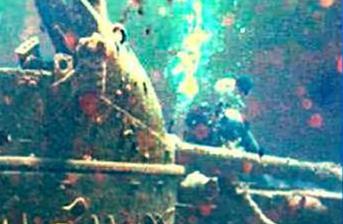}
			&\includegraphics[width=.18\textwidth]{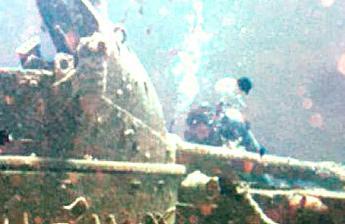}
			\\Input&Rahman~\emph{et al.}&Chiang~\emph{et al.}&Li~\emph{et al.} &Ours
		\end{tabular}
	\end{center}
	\caption{Zoomed in comparison with Rahman~\emph{et al.}, Chiang~\emph{et al.} and Li~\emph{et al.} on underwater degraded images in Figure~\ref{fig:underwater}.}\label{fig:underwater detail}
\end{figure*}

\subsection{Single Image Rain Removal}

	\begin{figure*}[htb]
	\centering
	\begin{tabular}{c@{\extracolsep{0.2em}}c@{\extracolsep{0.2em}}c@{\extracolsep{0.2em}}c@{\extracolsep{0.2em}}c@{\extracolsep{0.2em}}c@{\extracolsep{0.2em}}c@{\extracolsep{0.2em}}c@{\extracolsep{0.2em}}c@{\extracolsep{0.2em}}c}
		
		\includegraphics[width=.16\textwidth]{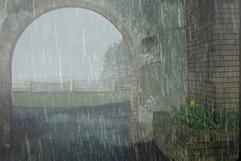}
		&\includegraphics[width=.16\textwidth]{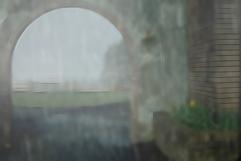}
		&\includegraphics[width=.16\textwidth]{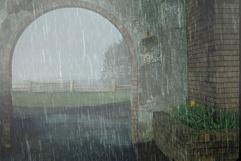}
		&\includegraphics[width=.16\textwidth]{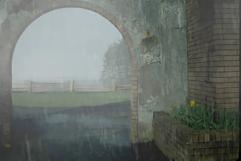}
		&\includegraphics[width=.16\textwidth]{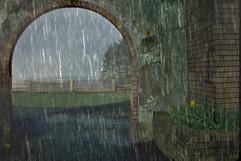}
		&\includegraphics[width=.16\textwidth]{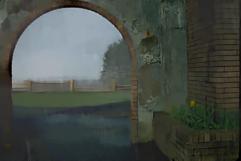}
		\\
		
		\includegraphics[width=.16\textwidth]{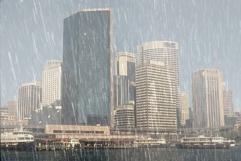}
		&\includegraphics[width=.16\textwidth]{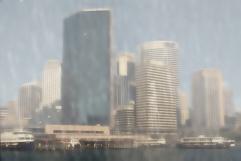}
		&\includegraphics[width=.16\textwidth]{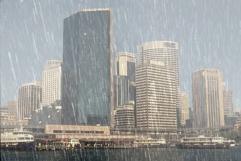}
		&\includegraphics[width=.16\textwidth]{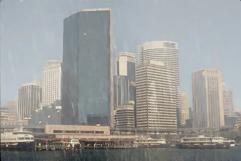}
		&\includegraphics[width=.16\textwidth]{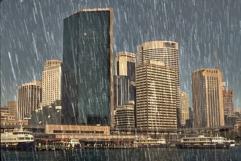}
		&\includegraphics[width=.16\textwidth]{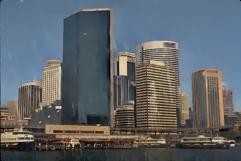}
		\\
		
		\includegraphics[width=.16\textwidth]{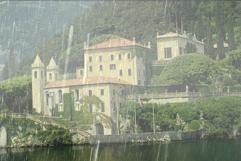}
		&\includegraphics[width=.16\textwidth]{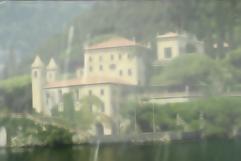}
		&\includegraphics[width=.16\textwidth]{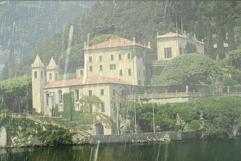}
		&\includegraphics[width=.16\textwidth]{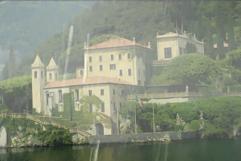}
		&\includegraphics[width=.16\textwidth]{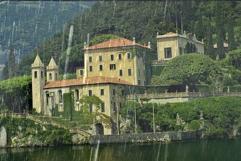}
		&\includegraphics[width=.16\textwidth]{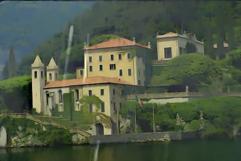}
		\\
		\includegraphics[width=.16\textwidth]{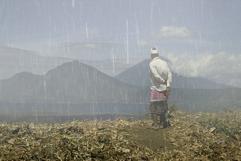}
		&\includegraphics[width=.16\textwidth]{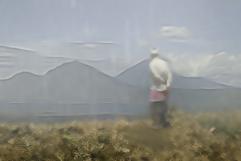}
		&\includegraphics[width=.16\textwidth]{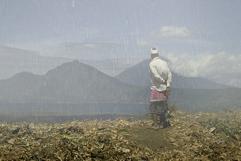}
		&\includegraphics[width=.16\textwidth]{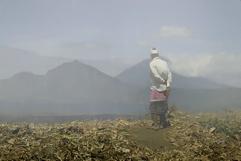}
		&\includegraphics[width=.16\textwidth]{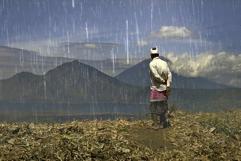}
		&\includegraphics[width=.16\textwidth]{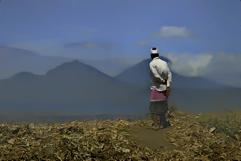}
		\\
		\includegraphics[width=.16\textwidth]{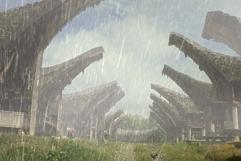}
		&\includegraphics[width=.16\textwidth]{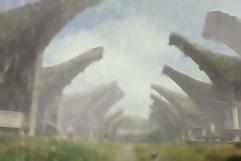}
		&\includegraphics[width=.16\textwidth]{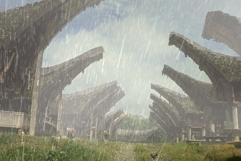}
		&\includegraphics[width=.16\textwidth]{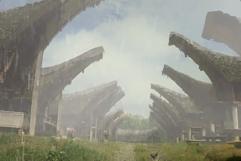}
		&\includegraphics[width=.16\textwidth]{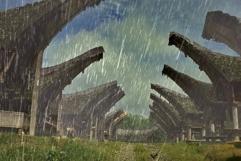}
		&\includegraphics[width=.16\textwidth]{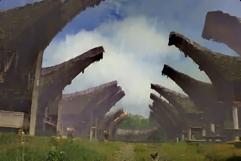}
		\\
		Input & SR &  DSC & LP&  Ours (WO) &  Ours (W) \\

	\end{tabular}
	\caption{Comparison with SR, DSC and LP on images with synthesized rain streaks and hazes. The last two columns are results of our proposed framework without and with task-aware image separation (respectively denoted as ``WO'' and ``W'').}
	\label{fig:visual}
\end{figure*}

\begin{figure*}[!htbp]
	\centering
	\begin{tabular}{c@{\extracolsep{0.2em}}c@{\extracolsep{0.2em}}c@{\extracolsep{0.2em}}c@{\extracolsep{0.2em}}c@{\extracolsep{0.2em}}c@{\extracolsep{0.2em}}c@{\extracolsep{0.2em}}c@{\extracolsep{0.2em}}c@{\extracolsep{0.2em}}c}
		
		\includegraphics[width=.16\textwidth]{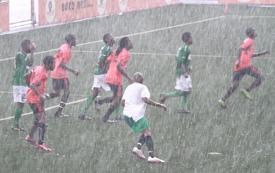}
		&\includegraphics[width=.16\textwidth]{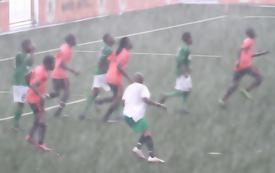}
		&\includegraphics[width=.16\textwidth]{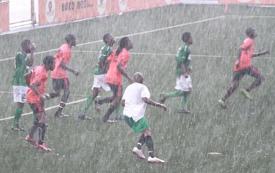}
		&\includegraphics[width=.16\textwidth]{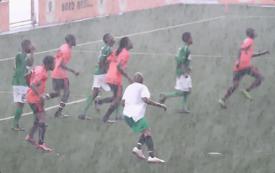}
		&\includegraphics[width=.16\textwidth]{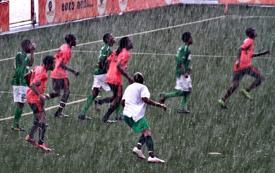}
		&\includegraphics[width=.16\textwidth]{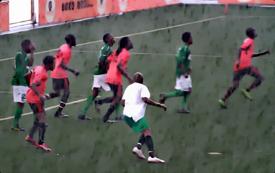}
		\\
		
		\includegraphics[width=.16\textwidth]{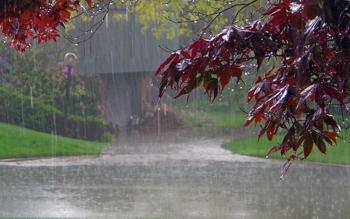}
		&\includegraphics[width=.16\textwidth]{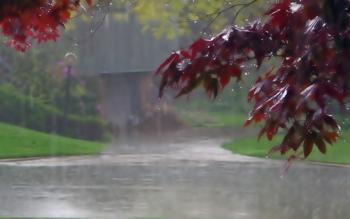}
		&\includegraphics[width=.16\textwidth]{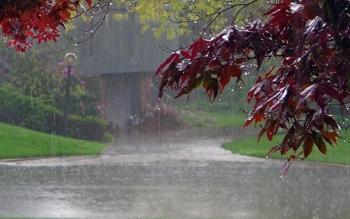}
		&\includegraphics[width=.16\textwidth]{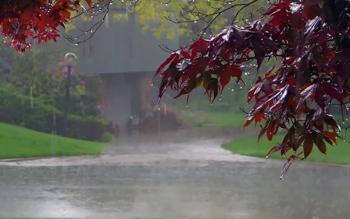}
		&\includegraphics[width=.16\textwidth]{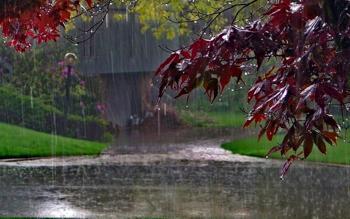}
		&\includegraphics[width=.16\textwidth]{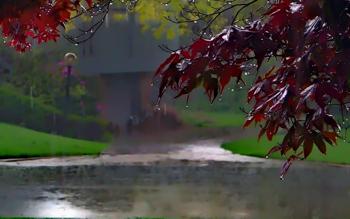}
		\\
		
		\includegraphics[width=.16\textwidth]{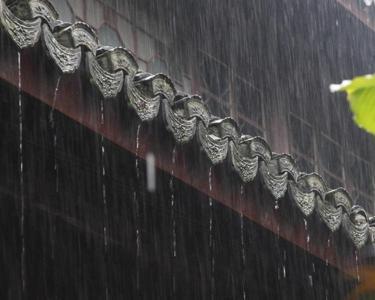}
		&\includegraphics[width=.16\textwidth]{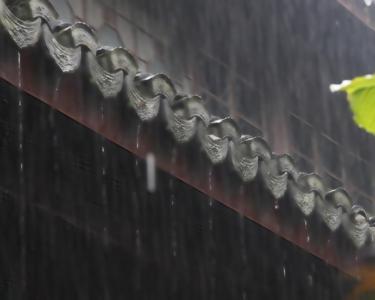}
		&\includegraphics[width=.16\textwidth]{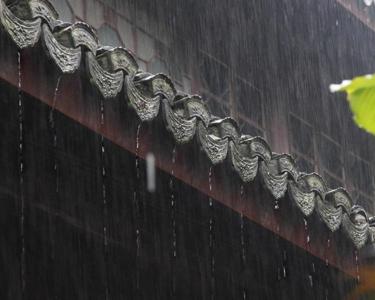}
		&\includegraphics[width=.16\textwidth]{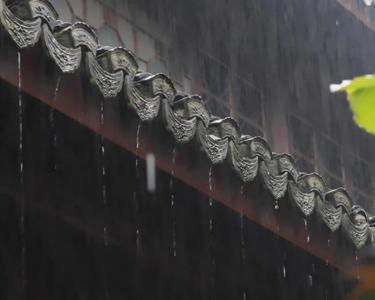}
		&\includegraphics[width=.16\textwidth]{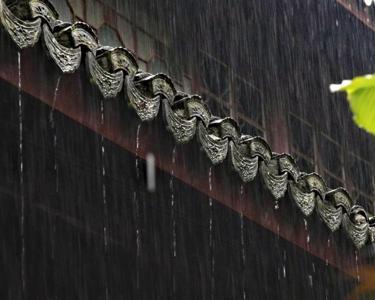}
		&\includegraphics[width=.16\textwidth]{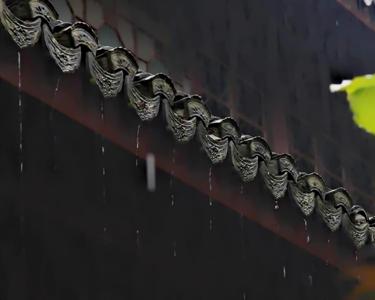}
		\\
		\includegraphics[width=.16\textwidth]{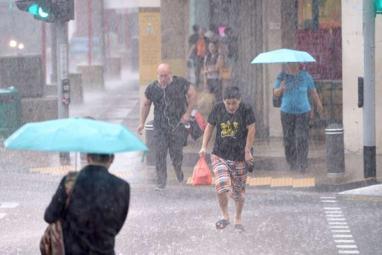}
		&\includegraphics[width=.16\textwidth]{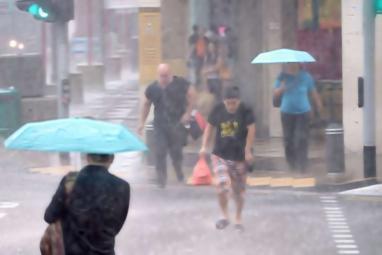}
		&\includegraphics[width=.16\textwidth]{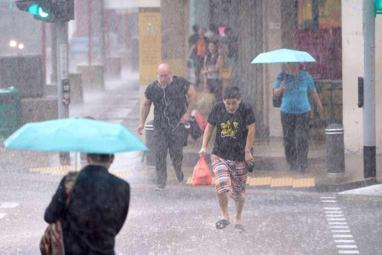}
		&\includegraphics[width=.16\textwidth]{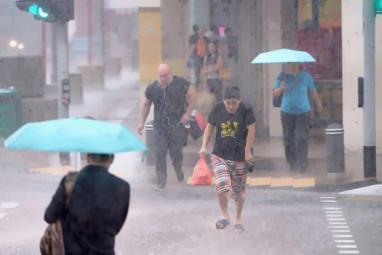}
		&\includegraphics[width=.16\textwidth]{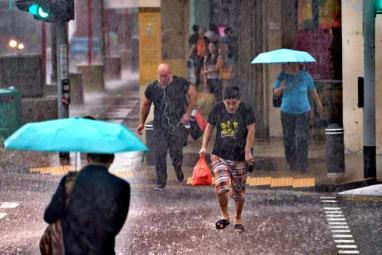}
		&\includegraphics[width=.16\textwidth]{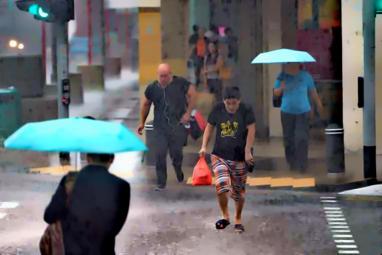}
		
		\\
		Input & SR &  DSC & LP&  Ours (WO)&  Ours (W)\\
	\end{tabular}
	\caption{Comparison with SR, DSC and LP on images with real heavy rain streaks. The last two columns are results of our proposed framework without and with task-aware image separation (respectively denoted as ``WO'' and ``W'').}
	\label{fig:visual-real}
\end{figure*}

Finally, we conduct experiments on rain removal and compare the performance of our proposed framework with state-of-the-art deraining approaches, including sparse representation based dictionary learning~\cite{kang2012automatic} (denoted as SR), discriminative sparse coding~\cite{luo2015removing} (denoted as DSC) and layer prior based method~\cite{li2016rain} (denoted as LP).
We first collect 5 images from \cite{li2016rain} and follow \cite{garg2006photorealistic} and \cite{he2011single} to generate rain streaks and hazes as our synthesized testing images. It can be observed in Figure~\ref{fig:visual} that SR approach tends to over-smooth the background and thus degrade the image quality. Though LP can perform well on rain streaks removal, the hazes still exist in the result images. In contrast, by simultaneously performing rain streaks and hazes removal, our proposed method can achieve the best performance among all compared approaches.
We then evaluate all these methods on 4 real-world heavy rain images from \cite{luo2015removing}. Again, we obtain the best qualitative performance among all the compared approaches.


\section{Conclusions}

This paper developed DPATN for single image dehazing. Different from previous works, which
either design models based on priors or
build networks in heuristic ways,
DPATN performed transmission propagation by aggregating priors and data in a deep residual architecture.
We investigated its propagation behaviors based on an energy perspective.
A lightweight training framework was also developed for DPATN. Finally, we proposed a task-aware image separation technique to extend DPATN for more challenging vision tasks, such as underwater image enhancement and single image rain removal. The global convergence property also be proved for the our propagation.
Experiments demonstrated that DPATN achieved favorable performance against state-of-the-art
approaches.

\bibliographystyle{IEEEtran}
\bibliography{egbib}
\begin{IEEEbiography}[{\includegraphics[width=1in,height=1.25in,clip,keepaspectratio]{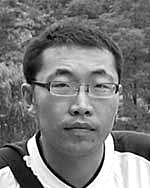}}]{Risheng Liu}
 received the BSc and PhD degrees
both in mathematics from the Dalian University
of Technology in 2007 and 2012, respectively. He
was a visiting scholar in the Robotic Institute of
Carnegie Mellon University from 2010 to 2012. He
served as Hong Kong Scholar Research Fellow at
the Hong Kong Polytechnic University from 2016
to 2017. He is currently an Associate Professor with
the International School of Information and Software
Technology, Dalian University of Technology. His
research interests include machine learning, optimization,
computer vision and multimedia. He was a co-recipient of the IEEE
ICME Best Student Paper Award in both 2014 and 2015. Two papers ware
also selected as Finalist of the Best Paper Award in ICME 2017.
\end{IEEEbiography}
\vspace{-40pt}
\begin{IEEEbiography}[{\includegraphics[width=1in,height=1.25in,clip,keepaspectratio]{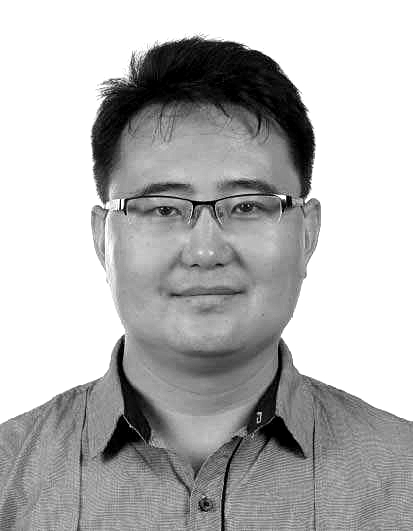}}]{Xin Fan}
 was born in 1977. He received the B.E. and Ph.D. degrees in information and communication engineering from Xian Jiaotong University, Xian, China, in 1998 and 2004, respectively. He was with Oklahoma State University, Stillwater, from 2006 to 2007, as a post-doctoral research Fellow. He joined the School of Software, Dalian University of Technology, Dalian, China, in 2009. His current research interests include computational geometry and machine learning, and their applications to low-level image processing and DTI-MR image analysis.
\end{IEEEbiography}
\vspace{-40pt}
\begin{IEEEbiography}[{\includegraphics[width=1in,height=1.25in,clip,keepaspectratio]{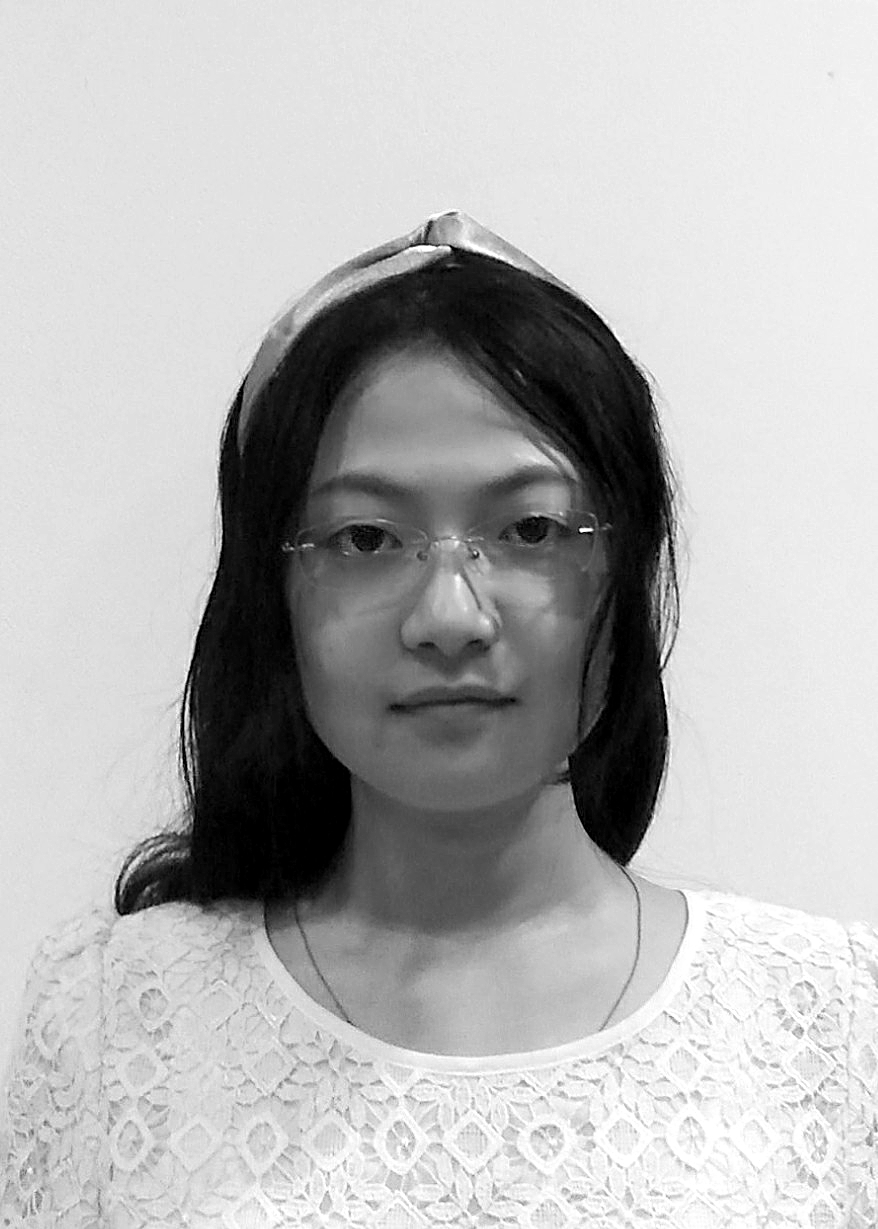}}]{Minjun Hou}
 was born in 1995. She received the B.E. degree in Software Engineering from Dalian University of Technology, China, in 2017. Now she is studying as a postgraduate student in Dalian University of Technology. Her research interests include computer vision, image processing and machine learning.
\end{IEEEbiography}
\vspace{-40pt}
\begin{IEEEbiography}[{\includegraphics[width=1in,height=1.25in,clip,keepaspectratio]{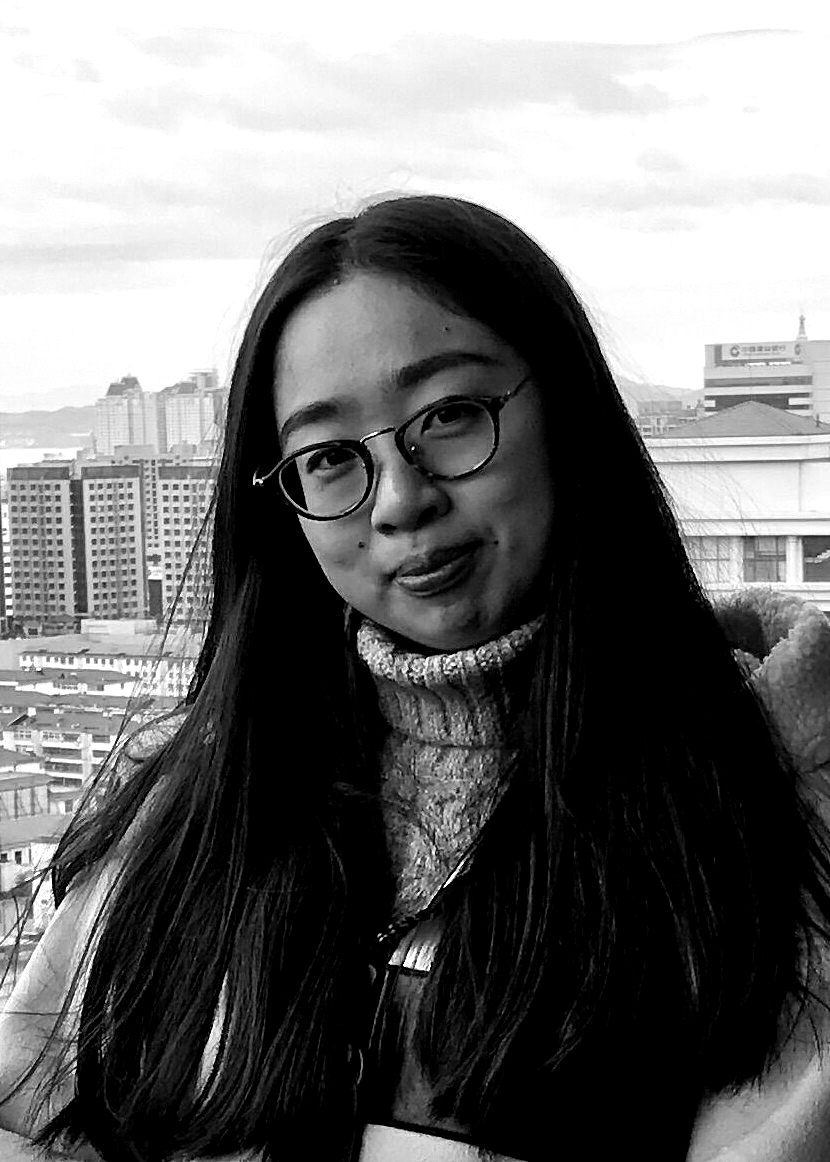}}]{Zhiying Jiang}
was born in 1995. She received the B.E. degree in Software Engineering from Dalian Maritime University, China, in 2017. Now she is studying as a postgraduate student in Dalian University of Technology. Her research interests include computer vision, image processing and machine learning.
\end{IEEEbiography}
\vspace{-40pt}
\begin{IEEEbiography}[{\includegraphics[width=1in,height=1.25in,clip,keepaspectratio]{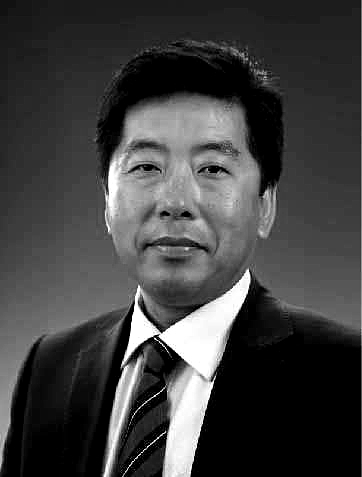}}]{Zhongxuan Luo}
 received the B.S. degree in Computational Mathematics from Jilin University, China, in 1985, the M.S. degree in Computational Mathematics from Jilin University in 1988, and the PhD degree in Computational Mathematics from Dalian University of Technology, China, in 1991. He has been a full professor of the School of Mathematical Sciences at Dalian University of Technology since 1997. His research interests include computational geometry and computer vision.
\end{IEEEbiography}
\vspace{-40pt}
\begin{IEEEbiography}[{\includegraphics[width=1in,height=1.25in,clip,keepaspectratio]{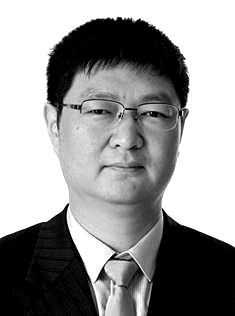}}]{Lei Zhang}
 received his B.Sc. degree in 1995 from Shenyang Institute of Aeronautical Engineering, Shenyang, P.R. China, and M.Sc. and Ph.D degrees in Control Theory and Engineering from Northwestern Polytechnical University, Xi’an, P.R. China, respectively in 1998 and 2001, respectively. From January 2003 to January 2006 he worked as a Postdoctoral Fellow in the Department of Electrical and Computer Engineering, McMaster University, Canada. He has been a Chair Professor, Since 2017. His research interests include computer vision, pattern recognition, image and video analysis, and biometrics.
\end{IEEEbiography}

\end{document}